\newtheorem{theorem}{Theorem}
\newtheorem{proposition}{Proposition}
\newtheorem*{example}{Running Example}
\newcommand{\jfnote}[1]%
    {\textcolor{orange}{\textbf{Jaime: #1}}}
\newcommand{\vrnote}[1]%
    {\textcolor{green}{\textbf{Vicen\c{c}: #1}}}
\newcommand{\kcnote}[1]%
    {\textcolor{brown}{\textbf{KC: #1}}}
\newcommand{\ctnote}[1]%
    {\textcolor{purple}{\textbf{Claire: #1}}}
\newcommand{\remove}[1]%
    {\textcolor{red}{#1}}
\newcommand{\state}{{s}}
\newcommand{\ctrl}{{u}}
\newcommand{\traj}{{\xi}}
\newcommand{\csig}{{\mathbf{u}}}
\newcommand{\xset}{{\mathcal{S}}}  
\newcommand{\cset}{{\mathcal{U}}}
\newcommand{\csigset}{{\mathbb{U}}}
\newcommand{\dyn}{{f}}
\newcommand{\fdisc}{{\mathbf{f}}}
\newcommand{\outcome}{{\mathcal{V}}}
\newcommand{\valfunc}{{V}}
\newcommand{\qfunc}{{Q}}
\newcommand{\consfunc}{{g}}
\newcommand{\targfunc}{{l}}
\newcommand{\safeset}{{\Omega}}
\newcommand{\target}{{\mathcal{T}}}
\newcommand{\constraint}{{\mathcal{K}}}
\newcommand{\failure}{{\mathcal{F}}}
\newcommand{\reach}{{\mathcal{R}}}
\newcommand{\reachavoid}{{\mathcal{RA}}}
\newcommand{\reachavoidDiscount}{{\reachavoid_\gamma}}
\begin{document}

\title{
Safety and Liveness Guarantees through Reach-Avoid Reinforcement Learning
}


\author{
    \IEEEauthorblockN{Kai-Chieh Hsu\IEEEauthorrefmark{1}\textsuperscript{\textsection},
    Vicen\c{c} Rubies-Royo\IEEEauthorrefmark{2}\textsuperscript{\textsection},
    Claire J. Tomlin\IEEEauthorrefmark{2},
    Jaime F. Fisac\IEEEauthorrefmark{1}}
    \IEEEauthorblockA{\IEEEauthorrefmark{1}Department of Electrical and Computer Engineering, Princeton University, United States
    \\ \texttt{\{\href{mailto:kaichieh@princeton.edu}{kaichieh},
                 \href{mailto:jfisac@princeton.edu}{jfisac}\}@princeton.edu}}
    \IEEEauthorblockA{\IEEEauthorrefmark{2}Department of Electrical Engineering and Computer Sciences, University of California, Berkeley, United States
    \\ \texttt{\{\href{mailto:vrubies@berkeley.edu}{vrubies},
                 \href{mailto:tomlin@berkeley.edu}{tomlin}\}@berkeley.edu}}
}

%

\maketitle
\begingroup\renewcommand\thefootnote{\textsection}
\footnotetext{Denotes equal contribution in alphabetical order.}
\endgroup

\begin{abstract}

Reach-avoid optimal control problems, in which the system must reach certain goal conditions while staying clear of unacceptable failure modes, are central to safety and liveness assurance for autonomous robotic systems, but their exact solutions are intractable for complex dynamics and environments. Recent successes in the use of reinforcement learning methods to approximately solve optimal control problems with performance objectives make their application to certification problems attractive; however, the Lagrange-type objective (cumulative costs or rewards over time) used in reinforcement learning is not suitable to encode temporal logic requirements. Recent work has shown promise in extending the reinforcement learning machinery to safety-type problems, whose objective is not a sum, but a minimum (or maximum) over time. In this work, we generalize the reinforcement learning formulation to handle all optimal control problems in the reach-avoid category. We derive a time-discounted reach-avoid Bellman backup with contraction mapping properties and prove that the resulting \emph{reach-avoid Q-learning} algorithm converges under analogous conditions to the traditional Lagrange-type problem, yielding an arbitrarily tight conservative approximation to the reach-avoid set. We further demonstrate the use of this formulation with deep reinforcement learning methods, retaining zero-violation guarantees by treating the approximate solutions as untrusted oracles in a model-predictive supervisory control framework. We evaluate our proposed framework on a range of nonlinear systems, validating the results against analytic and numerical solutions, and through Monte Carlo simulation in previously intractable problems.
Our results open the door to a range of learning-based methods for safe-and-live autonomous behavior, with applications across robotics and automation.
See \url{https://github.com/SafeRoboticsLab/safety_rl} for code and supplementary material.
\end{abstract}

\IEEEpeerreviewmaketitle


\section{Introduction} \label{sec:intro}
\begin{figure}[!ht]
    \centering
    \includegraphics[width=0.75\columnwidth]{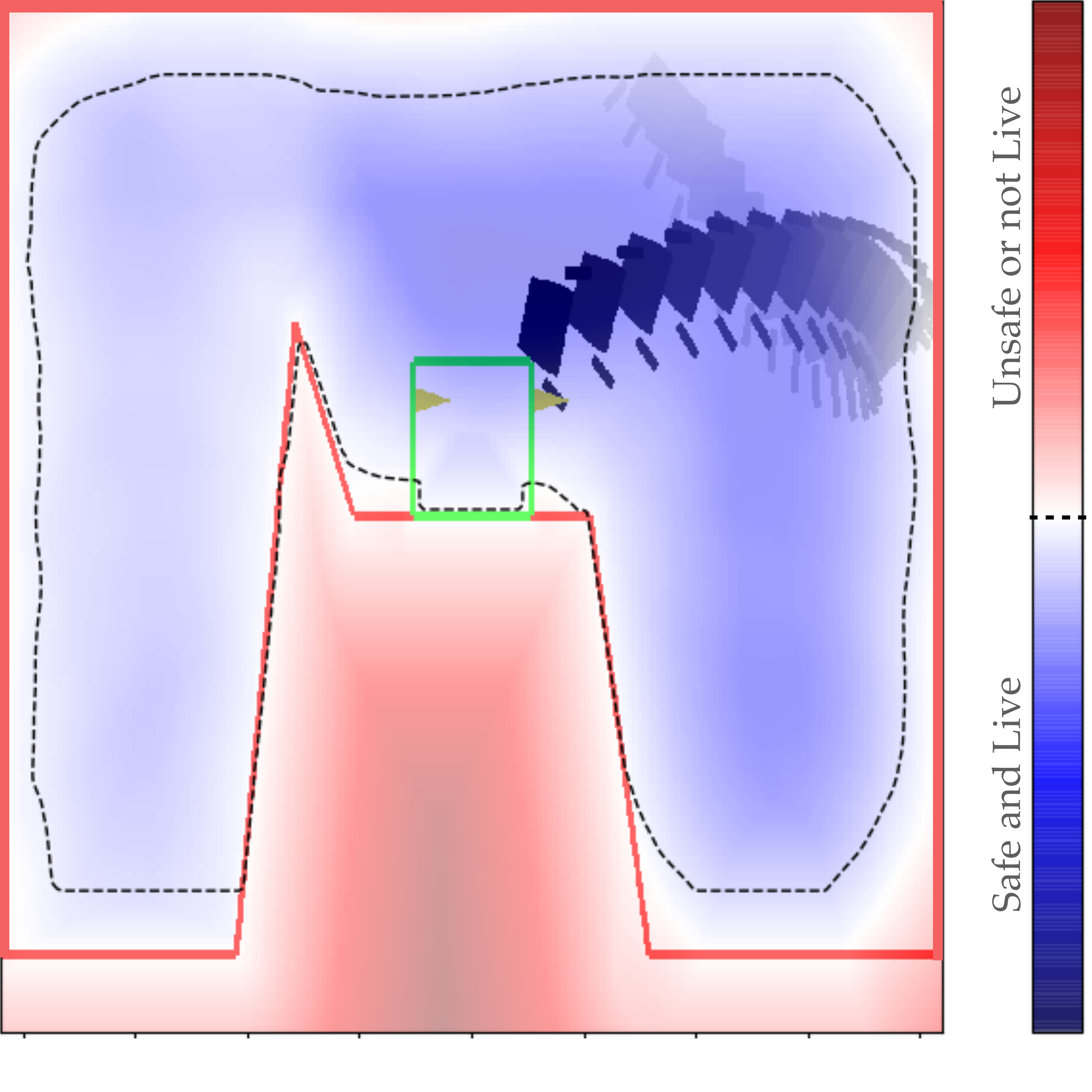}
    \caption{Snapshots of the OpenAI Gym Lunar Lander benchmark system executing the control policy learned through Reach-Avoid Double Deep Q-Network: the vehicle avoids the failure set (terrain and screen edges) and reaches the square (green) target region. The overlaid value function slice and corresponding zero level set (dashed line) indicate the computed safe-and-live positions for the vehicle's initial attitude and velocities.}
    \label{fig:front_fig}
\end{figure}
Recent years have seen a significant increase in the capabilities of robotic systems, driven by rapid advances in sensing and computing hardware paired with novel optimization algorithms and data-driven methodologies.
These improvements have led to an unprecedented growth in the number and variety of robots operating autonomously in the world,
from autonomous airborne delivery of medical supplies~\citep{ackerman2019zipline} to the first driverless vehicle services becoming available to the public in the last year~\citep{siddiquifaiz2020waymo}.

The correct functioning of the decision-making components governing the behavior of these systems is an integral part of their safety and reliability, and indeed their incorrect functioning has led to high-impact system-wide failures,
including
a number of recent deadly accidents involving malfunctioning advanced driver assistance systems~\citep{ntsb2017tesla16,ntsb2020tesla18,ntsb2020tesla19}.
Yet, while ensuring the correct functioning of robotic systems is critical to their viable deployment in high-stakes settings, these are precisely the settings in which it is often most challenging to provide strong operational assurances, because they tend to involve particularly complex, dynamic, and uncertain environments.
This is not made easier by the fact that much of the recent progress on data-driven decision-making yields ``black box'' components whose operation after (or during) training cannot readily be certified, requiring computationally intensive \emph{post hoc} verification~\citep{liu2021algorithms} or supervisory control policies to restrict their outputs at runtime~\citep{fisac2018general, li2020robust}.
Tractably computing decision-making policies that can enforce meaningful properties for autonomous robotic operation is therefore an important open problem in the field.


While the last half-decade has seen increasing interest in guaranteeing safety (preventing undesired conditions, such as collisions or severe rule violations) in robot planning and learning-based control, comparatively little attention has been paid to ensuring liveness,
that is, the eventual completion of specified goals (such as delivering passengers or cargo at the desired location).
Most real-world systems must meet a combination of safety and liveness properties, since enforcing one without the other tends to result in undesired behavior (a robot can typically preserve safety by indefinitely remaining at rest, or more easily achieve liveness---at least in theory---by violating safety constraints).
These two properties are highly complementary, and in fact they are known to jointly encode arbitrarily sophisticated temporal logic specifications~\citep{alpern1985defining}.

Reachability analysis is a central mathematical tool enabling the synthesis and certification of controllers with safety and liveness properties for continuous-state dynamical systems.
Given a dynamical system whose evolution can be influenced through a control input, it can determine the set of initial conditions \emph{and} the appropriate control policy for which the state of the system will be driven to some desired \emph{target} configuration while avoiding undesired \emph{failure} conditions.
Specifically, a reach-avoid problem can be seen as the conjunction of two reachability problems: one in the positive, where the controller's goal is to reach the target set, and one in the negative, where the controller seeks to \emph{not} reach the failure set.
However, because the optimal choice of control input is coupled between the two problems, it is \emph{not} enough to solve each of them separately and then combine the computed solutions.

Hamilton-Jacobi analysis~\citep{bansal2017hamiltonjacobi} provides a general methodology to compute the optimal solution to reach-avoid problems and, thereby, robust certificates on the controlled behavior of dynamical systems.
While the theory applies to general nonlinear dynamics, the central \emph{value function} that encodes the optimal controller is the solution of a Hamilton-Jacobi partial differential equation (or in some cases a variational inequality) that, with few exceptions, cannot be obtained analytically in closed form. As a result, general Hamilton-Jacobi methods ultimately rely on state space discretization and are computationally intense, requiring computation and memory exponential in the state dimension, which limits their practical use to systems with up to about 5 continuous state variables~\citep{mitchell2008flexible}.

Given the computation requirements, these synthesis methods are not commonly run online; instead, the most frequent approach is to precompute a controller (and its associated certificate) offline and store it in memory for online lookup. This approach has been successfully applied in a number of different settings, from robust trajectory tracking~\citep{Herbert2017} to safe learning-based control~\cite{fisac2018general}.

Recent efforts have sought to use deep learning to find approximate solutions to reachability problems by directly trying to enforce satisfaction of the associated Hamilton-Jacobi partial differential equation or variational inequality ~\cite{rubiesroyo2017recursive, bansal2020deepreach}. These methods work by repeatedly sampling random points over the state space and time horizon and computing the associated local Hamilton-Jacobi equation error. By enforcing this error to be zero, these methods effectively propagate information in backward time from the boundary condition. In this setting, a dynamical model of the system, usually control-affine, is required in order to compute the Hamiltonian and compute the error. In a similar spirit, a different line of work represents reachability controllers as neural network classifiers by directly exploiting the ``bang-bang" nature of optimal reachability controllers for control-affine systems~\cite{royo2019classification}.

A successful recent approach for Hamilton-Jacobi \emph{safety} analysis relies on the contraction mapping property commonly exploited in the reinforcement learning literature to deliver a converging temporal learning scheme for safety problems~\cite{fisac2019bridging}. In this work the authors exploit said contraction mapping using several reinforcement learning frameworks, and most notably, deep Q-learning.

\emph{Contribution.}
Our central theoretical contribution is the derivation of a novel, time-discounted formulation of the reach-avoid optimal control problem that lends itself to reinforcement learning methods thanks to the contraction mapping induced by the associated dynamic programming equation.
Unlike the previously derived safety-only analog, this discounted reach-avoid solution is guaranteed to produce a conservative under-approximation of the set of conditions from which the system can succeed at its task.
We further show that the approximation becomes arbitrarily tight as the time discounting becomes less pronounced.

Importantly, our guarantees can be extended to the use of deep reinforcement learning methods by treating the approximate optimal policy as an untrusted oracle and applying a supervisory control scheme: this insight is key, because it enables us to construct control policies with guaranteed safety and liveness for high-dimensional nonlinear systems that are intractable with classical approaches.
To determine the reliability of the proposed method as a synthesis tool, we present comprehensive validation results against analytic and numerical solutions when these are obtainable, and by exhaustive Monte Carlo simulation in two high-dimensional nonlinear systems for which reach-avoid solutions have, until now, not been obtainable. Fig.~\ref{fig:front_fig} showcases one of the high-dimensional examples we tested.

\section{Background} 
\label{sec:background}

\subsection{Dynamical System Safety and Liveness Analysis}
\label{subsec:dynamical_systems_reach_avoid_analysis}

Consider a dynamical system with state $\state \in \xset \subset \mathbb{R}^n$ and control input $\ctrl \in \cset \subset \mathbb{R}^m$, which evolves according to the ordinary differential equation
\begin{equation}\label{eq:dynamics_cont}
    \dot{\state} = \dyn(\state, \ctrl)
    \enspace.
\end{equation}
We assume the set $\cset$ to be compact and the dynamics $\dyn$ to be bounded and Lipschitz continuous; under these conditions, system trajectories ${\traj:\mathbb{R}_+\to\xset}$ are well defined, and continuous in time, for all measurable control inputs~\cite[Ch.~2]{Coddington1955}.
We use the notation $\traj_\state^\csig(\cdot)$ for a time trajectory starting at state $\state$ under control signal $\csig$.
For discrete-time approximations, we denote the time step $\Delta t>0$.

Safety and liveness specifications for dynamical systems can be succinctly formulated in terms of a \textit{target set} $\target \subset \mathbb{R}^n$, the collection of states where we want to steer trajectories, and a \textit{constraint set} $\constraint \subset \mathbb{R}^n$, the set of allowable states during the evolution of a trajectory. The complement of the constraint set is the \textit{failure set} which we denote as $\failure = \constraint^c$.
By convention, we let $\target$ and $\constraint$ be closed sets; no other assumptions (convexity, connectedness, etc.) are made.

The \emph{safe set} with respect to the failure set $\failure$ (or equivalently the \emph{viability kernel} of the constraint set $\constraint$), is defined as the collection of initial states from which the controller can indefinitely keep the system away from the failure set:
\begin{equation}
\safeset(\failure) \ := \ \{ \state \in \xset ~|~ \exists \csig \in \csigset, \forall\tau \ge 0, \traj_\state^\csig(\tau) \notin \failure \},
\label{eq:enf-back-safe}
\end{equation}
where $\csigset$ represents the collection of all measurable control signals $\csig: \mathbb{R}_+ \to \cset$. 
Conversely, the \emph{backward-reachable set} (or, more succinctly, the \emph{reach set}) of the target $\target$ is
\begin{equation}
\reach(\target) \ := \ \{ \state \in \xset ~|~ \exists \csig \in \csigset, \exists\tau \ge 0, \traj_\state^\csig(\tau) \in \target \}.
\label{eq:enf-back-reach}
\end{equation}

In many practical problems in robotics our goal is a combination of the above: namely, we want the system to reach a target configuration while maintaining safety. This requirement brings forth the notion of the \emph{reach-avoid set}:
\begin{IEEEeqnarray}{rcl}
\reachavoid(\target; \failure) & \ := \ \{ & \state \in \xset ~|~ \exists \csig \in \csigset, \tau \ge 0,
\nonumber \\
& & \traj_\state^\csig(\tau) \in \target \land \forall \kappa \in [0,\tau] ~ \traj_\state^\csig(\kappa) \notin \failure \},
\label{eq:enf-back-reach-avoid}
\end{IEEEeqnarray}
which denotes the set of states for which some control signal exists that can drive the system to $\target$ while avoiding $\failure$ at all prior times.
Note that this set is in general \emph{not} equal to the intersection of the previous two sets, because (a) there may exist states in $\reach(\target)\cap\safeset(\failure)$ from which the controller can \emph{either} drive the system to $\target$ \emph{or} keep it away from $\failure$, but not both (in other words, the control signals $\csig$ exist in both cases but are mutually incompatible), and (b) there may exist states $\state\not\in\safeset(\failure)$ from which the system cannot be indefinitely kept away from $\failure$, but it can \emph{first} be driven to $\target$.
The latter case is important for encoding complex operational requirements involving preconditions: for example, we may wish to prevent an autonomous vehicle from entering an intersection \emph{before} reaching a state from which it is visible to any potential conflicting traffic.

\subsection{Reach-Avoid Analysis via Dynamic Programming}
\label{subsec:HJ_safety_and_liveness}

In order to compute the reach-avoid set it is helpful to define two \textit{implicit surface functions} which are used to delineate sets of states. For our problem of interest we will define two such functions $\targfunc(\cdot),\consfunc(\cdot) : \xset \to \mathbb{R}$, which are Lipschitz-continuous and satisfy the following properties:
\begin{align}
    \targfunc(\state) \leq 0 &\iff \state \in \target \label{eq:implicit_target} \\
    \consfunc(\state) > 0 &\iff \state \in \failure \enspace. \label{eq:implicit_failure}
\end{align}
Using these implicit surface function definitions, let us define the following payoff functional for trajectories of the system in \textit{ discrete-time}\footnote{We follow the discrete-time formulation consistently with the standard reinforcement learning framework; we direct readers to~\cite{fisac2015reach} for the continuous-time analog.}:
\begin{IEEEeqnarray}{rl}
\label{eq:cost_functional_reach_avoid}
    \hspace{-2mm}
    \outcome^{\csig}(\state) = \min_{\tau \in \{0,1,...\}} \max \Big\{ \targfunc(\traj_{\state}^{\csig}(\tau)),
    \max_{\kappa \in \{0,\hdots,\tau\}} \consfunc(\traj_{\state}^{\csig}(\kappa)) \Big\}.
\end{IEEEeqnarray}
The outer maximum in \eqref{eq:cost_functional_reach_avoid} acts as an ``overwriting'' mechanism. Given definition \eqref{eq:implicit_failure}, if a trajectory remains inside $\constraint$ for all time, the inner maximum will be negative, and $\outcome^{\csig}(\state)$ will be negative if and only if the trajectory reaches $\target$. In contrast, if the constraints are ever violated the inner maximum will be positive, which ensures that the overall payoff must also be positive. In other words, the payoff can only be negative if $\target$ is reached without previously violating the constraints. 

The goal from any initial state will be to minimize the payoff functional \eqref{eq:cost_functional_reach_avoid}, which leads to the definition of the \textit{value function} over the entire state space, 
\begin{equation} \label{eq:value_function_reach_avoid}
    \valfunc(\state) = \inf_{\csig \in \csigset} \outcome^{\csig}(\state) \text{.}
\end{equation}
The \textit{sign} of the value function encodes crucial information on the safety and liveness of our problem. Concretely,
\begin{IEEEeqnarray}{c}\label{eq:zero_level_set_characterization}
    \valfunc(\state) \leq 0 \iff \state \in \reachavoid(\target; \failure).
\end{IEEEeqnarray}

It can be shown that the value function must satisfy the fixed-point \textit{reach-avoid Bellman equation} (RABE) below, where $\state_+^\ctrl = \traj_\state^{u}(\Delta t)$. For details on the derivation we refer the reader to Appendix \ref{appendix_fixed_point}.

\begin{IEEEeqnarray}{rl}
\label{eq:fixed_point_non_contractive}
    \valfunc(\state) = \max\Big\{ \consfunc(\state), \ & \min\big\{\targfunc(\state),
    \inf_{\ctrl \in \cset} \valfunc(\state_+^\ctrl )~\big\}~\Big\}.
\end{IEEEeqnarray}

\begin{example}[2-D point particle]
    Consider a point particle with simple motion \emph{\`a la} Isaacs~\cite{Isaacs1965} that is required to reach a (yellow) target box without crossing any of three (magenta) obstacle boxes or leaving its rectangular domain, as shown in Fig. \ref{fig:exp_convergentFamily}.
    The dynamics of the particle are described by the equation
    \begin{IEEEeqnarray}{c}
    \label{eq:particle_dynamics}
        \dot{x} = \ctrl v_x,\
        \dot{y} = v_y,
    \end{IEEEeqnarray}
    where $v_x, v_y > 0$ are horizontal and vertical speed constants and $\ctrl$ is the control input.
    The state is the particle's position, $\state = [x, y]^T$ and the allowed control is discrete, $\ctrl \in \{ -1, 0, 1\}$.
    The dynamics in \eqref{eq:particle_dynamics} describe a particle that moves continually upward at a rate $v_y$ while only being able to control its horizontal speed through $u$.
    
    Here each box (obstacles, boundary and target) is specified by its center position and dimensions, i.e., $(p, L)$, with $\ p = [p_x, p_y]^T$ and $L = [L_x, L_y]^T$.
    The safety margin function is expressed as the point-wise maximum of signed distance functions.
    Thus, $\consfunc(\state)$ is computed by the margin to the boundary ($\consfunc_b(\state)$) and the obstacles ($\consfunc_o(\state)$) as below
    \begin{IEEEeqnarray}{l}
    \consfunc_b(\state) := \max_i \Delta(\state, p^{c_b})[i] - L^{c_b}[i] / 2,\
    \nonumber \\
    \consfunc_o(\state) := \max_j \max_i L^{c_j}[i] / 2 - \Delta(\state, p^{c_j})[i],
    \nonumber \\
    \consfunc(\state) := \max \big\{ \consfunc_b(\state),\ \consfunc_o(\state) \big\},
    \nonumber
    \end{IEEEeqnarray}
    where $\Delta(\state, p) := [ |x - p_x|,\ |y - p_y| ]^T$.
    The target margin function can be expressed as
    \begin{IEEEeqnarray}{c}
    \targfunc(\state) := \max_i \Delta(\state, p^t)[i] - L^t[i] / 2.
    \nonumber
    \end{IEEEeqnarray}
\end{example}

\subsection{Q-Learning and Approximate Dynamic Programming}

Reinforcement learning, also known as direct adaptive optimal control~\cite{sutton1992rlisdaoc}, provides a framework for solving dynamic decision problems represented as Markov Decision Processes (MDPs) where the underlying objective is to maximize a Lagrange-type functional 
\begin{IEEEeqnarray}{c}
\outcome^{\csig} (\state) = \sum_\tau \gamma^\tau r(\state_\tau, \ctrl_\tau),
\end{IEEEeqnarray}
where $r(\state,\ctrl):\mathbb{R}^n \times \mathbb{R}^m \to \mathbb{R}$ is known as the \emph{reward function} and $\gamma\in[0,1)$ is the time-discount parameter. Tabular Q-learning, or asynchronous dynamic programming \cite{Watkins1992a}, is a type of model-free reinforcement learning scheme which does not require knowledge of the dynamics of the system. It maintains a state-action value matrix, known as the Q-values, and updates each entry in the following manner: ${\qfunc (\state, \ctrl) \xleftarrow{\alpha} r(s,u) + \gamma \max_{\ctrl'} \qfunc(s_+^\ctrl, \ctrl')}$, where the operator $\xleftarrow{\alpha}$ implies ${x \xleftarrow{\alpha} y \equiv x \leftarrow x + \alpha(y-x)}$.
Under some mild conditions  \cite{Tsitsiklis1994}, tabular Q-learning converges to the optimal solution, meaning that the optimal policy and state value function are given by ${\ctrl^* := \arg \max_\ctrl \qfunc(\state, \ctrl)}$ and ${\valfunc(\state) = \qfunc(\state, \ctrl^*)}$.
Unfortunately, the discretization of the state space in tabular Q-learning leads to an exponential growth in storage requirements, an issue which is commonly referred to as the \textit{curse of dimensionality}.


Deep reinforcement learning shows potential to alleviate the curse of dimensionality in optimal control problems by using deep neural networks to find approximately optimal value functions and policies \cite{van2015DDQN, Timothy2016DDPG, Haarnoja2018SAC}.
In particular, double deep Q-Network (DDQN) \cite{van2015DDQN} uses two deep neural networks (NNs) to approximate the Q-values.
One NN is called the online network with parameter $w$ and the other network is called the target network with parameter $\tilde{w}$.
The Q-value is updated by ${\qfunc_w (\state, \ctrl) \xleftarrow{\alpha} r(\state,\ctrl) + \gamma \qfunc_{\tilde{w}} (\state_+^\ctrl, \max_{\ctrl'} \qfunc_w (\state_+^\ctrl, \ctrl'))}$.

In \cite{fisac2019bridging}, the authors show how to use reinforcement learning to approximate the solution to the Hamilton-Jacobi-Bellman (HJB) equation for the special case where the cost functional is given in the form ${\outcome^{\csig} = \min_{\tau \in \{0,1,...\}} \ \consfunc(\traj_\state^{\csig})}$. In particular, they use the following Q-learning update 
\begin{IEEEeqnarray}{ll}
\qfunc_w (\state, \ctrl) \leftarrow \
& \gamma \min \big\{ \consfunc(\state),\ \qfunc_{\tilde{w}} (\state_+^\ctrl,\ \max_{\ctrl'} \qfunc_w (\state_+^\ctrl, \ctrl')) \big\}
\nonumber \\
& + (1-\gamma)\ \consfunc(\state),
\label{eq:contract_HJB_safety}
\end{IEEEeqnarray}
which can be used to approximate solutions for safety or liveness problems, but not both. In the next section we introduce a more general update for the broader class of reach-avoid problems.
\section{The Time-Discounted Reach-Avoid Bellman Equation} 
\label{sec:time_discounted_reach_avoid_BE}

\begin{figure*}[t]
    \centering
    \includegraphics[width=0.9\textwidth]{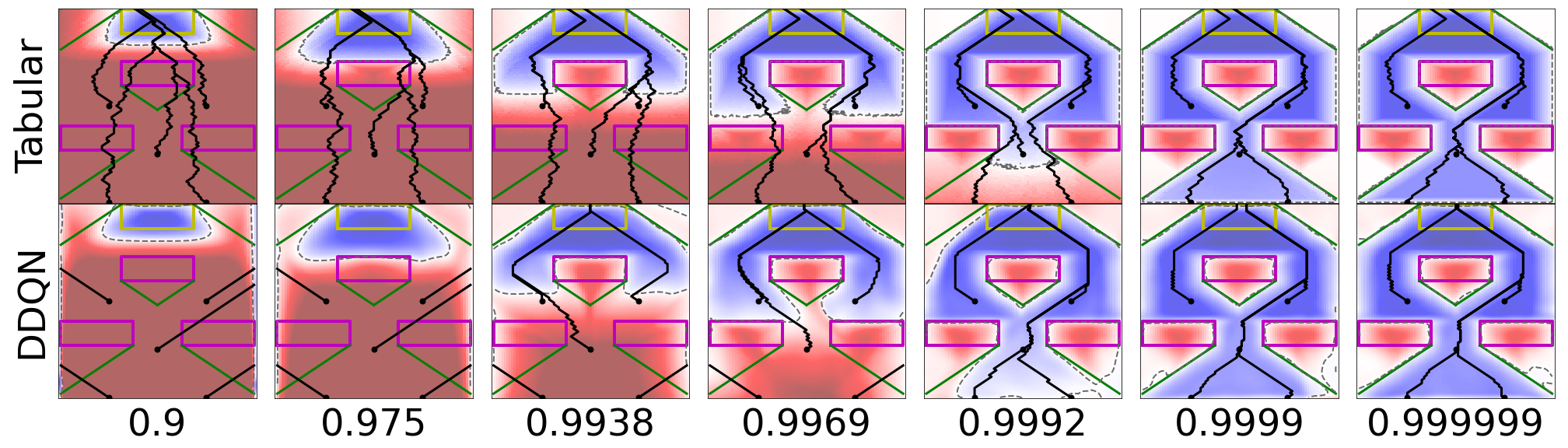}
    \caption{A convergent family of under-approximations that asymptotically approaches the undiscounted reach-avoid set as $\gamma \rightarrow 1$. The red region indicates positive state value, while blue region indicates negative state value. The dashed gray line specifies the zero level set or the discounted reach-avoid set boundary, while Green lines specify analytic reach-avoid set boundary. The solid black lines show trajectory rollouts from five initial states.}
    \label{fig:exp_convergentFamily}
\end{figure*}

Unlike the Lagrange-type dynamic programming equation commonly used in reinforcement learning, the reach-avoid optimality condition \eqref{eq:fixed_point_non_contractive}
does not induce a contraction mapping on $\valfunc$.
This contraction mapping property is a crucial requirement for the convergence of the value iteration scheme and related temporal learning approaches, such as Q-learning~\cite{Watkins1992a,Tsitsiklis1994}.
In what follows we provide a principled modification of \eqref{eq:fixed_point_non_contractive} that induces a contraction mapping in the space of value functions through the introduction of a time discount parameter $\gamma \in [0,1)$. This enables the use of reinforcement learning methods based on temporal learning, in the same spirit as \cite{fisac2019bridging}. In addition, we will see that for any choice of $\gamma$, the fixed-point solution $V_\gamma(x)$ and its corresponding reach-avoid set will be an under-approximation of the true (undiscounted) reach-avoid set $\reachavoid_\gamma \subseteq \reachavoid$, which (under mild technical assumptions) becomes arbitrarily tight as $\gamma\to1$.

\subsection{Probabilistic Interpretation of Time Discounting}
As previously stated, \eqref{eq:fixed_point_non_contractive} is not guaranteed to induce a contraction in the space of value functions. This can be addressed by introducing a discount factor $\gamma \in [0,1)$, which can be interpreted as a probability of episode continuation. In ``traditional'' (Lagrange-type) reinforcement learning, for example, the value of a state can be written using this interpretation as follows
\begin{IEEEeqnarray}{c}
    \valfunc(\state) = \max_{\ctrl \in \cset} ~ 
        (1-\gamma) r(\state, \ctrl) + 
        \gamma \big( r(\state, \ctrl) + \valfunc \big(\state_+^\ctrl\big) \big),
\end{IEEEeqnarray}
with $\state_+^\ctrl = \state + f(\state, \ctrl) \Delta t$.
The term $1-\gamma$ can be seen as representing the probability of the episode terminating in one step, therefore only accounting for the immediate reward, with $\gamma$ conversely representing the probability of the episode continuing, hence accounting for the immediate reward and the expectation of future returns, as encoded in the value function.
Maintaining this interpretation, and noting that truncating \eqref{eq:cost_functional_reach_avoid} to a single-step problem leaves ${\outcome^{\csig}(\state) = \max\{g(\state),l(\state)\}}, \forall\csig$,
we can modify \eqref{eq:fixed_point_non_contractive} with an analogous ``probabilistic time discounting'':
\begin{IEEEeqnarray}{rcl}
    B_\gamma [\valfunc](\state) & \ := \ &
    \gamma \max \Big\{ 
        \min \big\{
            \min_{\ctrl \in \cset} \valfunc( \state_+^\ctrl),\ 
            \targfunc(\state) \big\},\
        \consfunc(\state) \Big\}
    \ +
    \nonumber \\ 
    & &
    (1-\gamma) \max\{ \targfunc(\state), \consfunc(\state) \},
    \nonumber \\
    \valfunc_\gamma(\state) & \ = \ & B_\gamma [\valfunc_\gamma](\state).
    \label{eq:DRABE}
\end{IEEEeqnarray}
We denote \eqref{eq:DRABE} the \textit{discounted reach-avoid Bellman equation} (DRABE).
Unlike its undiscounted counterpart~\eqref{eq:fixed_point_non_contractive}, this fixed-point equation does induce a contraction mapping in the space of value functions. We show this next.

\subsection{Contraction Mapping and Reach-Avoid Q-learning}
\label{sec:contraction_Q_learning}

Based on the time-discounted value function defined in \eqref{eq:DRABE}, we can define the $\gamma$-discounted reach-avoid set, by analogy with~\eqref{eq:zero_level_set_characterization}, as
$\reachavoidDiscount := \{ \state \in \xset | \valfunc_\gamma (\state) \leq 0 \}$.
We present three propositions for the proposed DRABE and leave their proofs in Appendix~\ref{appendix_fixed_point}.
We first show that DRABE induces a contraction mapping.
With the important contraction mapping property at hand, Q-learning can be guaranteed to converge if the learning rate satisfies the assumptions in \cite{Tsitsiklis1994}.
In practice, this contraction mapping property also enables the use of approximate (``deep'') Q-learning methods to be directly applied, as demonstrated in Section \ref{sec:exp}, even though these methods lack the theoretical convergence guarantees of their exact (tabular) counterpart.
Finally, we show that the fixed-point solution of DRABE converges to the fixed-point solution of RABE when $\gamma$ approaches to $1$.

\begin{proposition}[Contraction Mapping]
\label{prop:contraction}
The discounted reach-avoid Bellman equation~\eqref{eq:DRABE} induces a contraction mapping under the supremum norm for any $\gamma \in [0, 1)$.
\end{proposition}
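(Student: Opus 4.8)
The plan is to show directly that $B_\gamma$ contracts with modulus $\gamma$ in the supremum norm, i.e.\ that $\|B_\gamma[\valfunc] - B_\gamma[\valfunc']\|_\infty \le \gamma \|\valfunc - \valfunc'\|_\infty$ for any two candidate value functions $\valfunc, \valfunc'$. The first observation is structural: the term $(1-\gamma)\max\{\targfunc(\state), \consfunc(\state)\}$ in \eqref{eq:DRABE} does not depend on the value function at all, so it cancels exactly when subtracting $B_\gamma[\valfunc'](\state)$ from $B_\gamma[\valfunc](\state)$. This reduces the problem to bounding the difference of the two $\gamma$-weighted terms, and immediately reveals where the factor $\gamma$ in the contraction modulus will come from.

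Next I would assemble a small toolkit of non-expansiveness facts for the pointwise $\max$ and $\min$, the only nonlinearities in the backup. Concretely, for any reals, $|\max\{a,c\} - \max\{b,c\}| \le |a-b|$ and $|\min\{a,c\} - \min\{b,c\}| \le |a-b|$, and for the minimization over controls, $|\min_{\ctrl \in \cset} \valfunc(\state_+^\ctrl) - \min_{\ctrl \in \cset} \valfunc'(\state_+^\ctrl)| \le \max_{\ctrl \in \cset} |\valfunc(\state_+^\ctrl) - \valfunc'(\state_+^\ctrl)| \le \|\valfunc - \valfunc'\|_\infty$. These are standard, but I would state and prove them first because the entire argument is just their composition.

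The core step then proceeds from the inside out, peeling off one operator at a time for an arbitrary fixed state $\state$. The innermost quantity $\min_{\ctrl \in \cset} \valfunc(\state_+^\ctrl)$ differs from its $\valfunc'$ counterpart by at most $\|\valfunc-\valfunc'\|_\infty$; passing this through the inner $\min\{\cdot,\ \targfunc(\state)\}$ (with $\targfunc(\state)$ a fixed argument common to both) preserves the bound by non-expansiveness; passing the result through the outer $\max\{\cdot,\ \consfunc(\state)\}$ (again with $\consfunc(\state)$ fixed) preserves it once more. Multiplying by $\gamma$ yields $|B_\gamma[\valfunc](\state) - B_\gamma[\valfunc'](\state)| \le \gamma\|\valfunc-\valfunc'\|_\infty$, and since the right-hand side is independent of $\state$, taking the supremum over $\state$ gives the claim with modulus $\gamma < 1$.

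I do not anticipate a genuine obstacle: the argument is a direct composition of non-expansiveness properties. The only place warranting care is the $\min_{\ctrl \in \cset}$ step, where one invokes the elementary inequality $|\min_\ctrl a_\ctrl - \min_\ctrl b_\ctrl| \le \max_\ctrl |a_\ctrl - b_\ctrl|$ (the minima are attained since $\cset$ is compact) rather than any pointwise manipulation. With the contraction established at modulus $\gamma < 1$, the Banach fixed-point theorem then guarantees a unique $\valfunc_\gamma$ solving \eqref{eq:DRABE}, which is precisely the property needed to justify the convergence of value iteration and Q-learning invoked in the surrounding discussion.
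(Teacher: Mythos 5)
Your proposal is correct and takes essentially the same route as the paper's proof: both cancel the $(1-\gamma)\max\{\targfunc(\state),\consfunc(\state)\}$ term, strip the outer $\max\{\cdot,\consfunc(\state)\}$ and inner $\min\{\cdot,\targfunc(\state)\}$ by non-expansiveness, and reduce to bounding $\left|\min_{\ctrl\in\cset}\valfunc(\state_+^\ctrl) - \min_{\ctrl\in\cset}\valfunc'(\state_+^\ctrl)\right|$ by $\gamma^{-1}$ times the contraction bound. The only cosmetic difference is that you invoke the inequality $\left|\min_\ctrl a_\ctrl - \min_\ctrl b_\ctrl\right| \le \max_\ctrl |a_\ctrl - b_\ctrl|$ as a stated lemma, whereas the paper proves it inline via the without-loss-of-generality argmin argument---the same content in both cases.
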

\begin{proposition}[Convergence of Reach-Avoid Q-Learning]\label{prop:ra_q_learning}
    Let ${\mathbf{S}\subseteq\mathcal{S}}$ and $\mathbf{U}\subseteq\mathcal{U}$ be finite discretizations of the state and action spaces, and let $\mathbf{f}:\mathbf{S}\times\mathbf{U}\to\mathbf{S}$ be a discrete transition function approximating the system dynamics.
    The Q-Learning scheme, applied to the Discounted Reach-Avoid Bellman Equation~\eqref{eq:DRABE} and executed on the above discretization converges, with probability $1$, to the optimal state-action safety value function
    \begin{align*}
        Q(\mathbf{s},\mathbf{u}):= & (1-\gamma) \max\{ \targfunc(\mathbf{s}), \consfunc(\mathbf{s}) \} +
        \\
        & \gamma \max \Big\{
        \consfunc(\mathbf{s}),\
        \min\Big\{ \targfunc(\mathbf{s}), \max_{\mathbf{u}'\in\mathbf{U}}Q\big(\fdisc(\mathbf{s}, \mathbf{u}),\mathbf{u}'\big) \Big\} \Big\}
        \enspace,
    \end{align*}
    in the limit of infinite exploration time and given partly-random episode initialization and learning policy with full support over $\mathbf{S}$ and $\mathbf{U}$ respectively.
    Concretely, the temporal difference learning rule is:
    \begin{align*}
        &Q_{k+1}(\mathbf{s},\mathbf{u}) \gets Q_{k}(\mathbf{s},\mathbf{u}) + \alpha_{k} \Big[
        (1-\gamma) \max\{ \targfunc(\state), \consfunc(\state)\}
        \ +
        \nonumber \\ 
        &
        \gamma \max \Big\{ \consfunc(\state) ,
        \min \big\{
            \max_{\mathbf{u}'\in\mathbf{U}}Q\big(\fdisc(\mathbf{s}, \mathbf{u}),\mathbf{u}'\big),\ 
            \targfunc(\state) \big\}\
        \Big\}
        - Q_{k}(\mathbf{s},\mathbf{u}) \Big]
        \,,
    \end{align*}
    for learning rate $\alpha_{k}(\mathbf{s},\mathbf{u})$ satisfying
    \begin{equation*}
        \sum_k \alpha_k(\mathbf{s},\mathbf{u}) = \infty \quad
        \sum_k \alpha_k^2(\mathbf{s},\mathbf{u}) < \infty
        \enspace,
    \end{equation*}
    for all $\mathbf{s}\in\mathbf{S}, \mathbf{u}\in\mathbf{U}$.
    \newline
\end{proposition}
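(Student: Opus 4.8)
The plan is to reduce the statement to the asynchronous stochastic-approximation theorem of Tsitsiklis~\cite{Tsitsiklis1994}, whose hypotheses are a contraction property of the underlying operator, the Robbins--Monro conditions on the step sizes, and infinitely-frequent updating of every coordinate. Proposition~\ref{prop:contraction} already supplies the contraction of $B_\gamma$ on the space of \emph{state} value functions; the first task is therefore to lift this property to the space of \emph{state--action} functions $\qfunc:\mathbf{S}\times\mathbf{U}\to\reals$, and the remaining tasks are to cast the given temporal-difference rule as a stochastic-approximation iteration for that operator and to check the three hypotheses one by one.

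First I would define the Q-Bellman operator $B_\gamma^{\qfunc}$ whose fixed point is the $Q$ in the statement, namely
\begin{equation*}
(B_\gamma^{\qfunc}\qfunc)(\mathbf{s},\mathbf{u}) := (1-\gamma)\max\{\targfunc(\mathbf{s}),\consfunc(\mathbf{s})\} + \gamma\max\Big\{\consfunc(\mathbf{s}),\min\big\{\targfunc(\mathbf{s}),\max_{\mathbf{u}'\in\mathbf{U}}\qfunc(\fdisc(\mathbf{s},\mathbf{u}),\mathbf{u}')\big\}\Big\}.
\end{equation*}
Setting $\valfunc(\mathbf{s}) := \max_{\mathbf{u}}\qfunc(\mathbf{s},\mathbf{u})$ identifies $B_\gamma^{\qfunc}$ with the state-space operator $B_\gamma$ evaluated along the deterministic successor $\fdisc(\mathbf{s},\mathbf{u})$. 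Because the maps $a\mapsto\max\{c,a\}$, $a\mapsto\min\{c,a\}$, and $\qfunc\mapsto\max_{\mathbf{u}'}\qfunc(\cdot,\mathbf{u}')$ are all non-expansive in the supremum norm, the only place a factor can enter is the explicit $\gamma$ multiplying the term that contains $\qfunc$; repeating the estimate used for Proposition~\ref{prop:contraction} then gives $\|B_\gamma^{\qfunc}\qfunc_1 - B_\gamma^{\qfunc}\qfunc_2\|_\infty \le \gamma\,\|\qfunc_1-\qfunc_2\|_\infty$, so $B_\gamma^{\qfunc}$ is a $\gamma$-contraction with a unique fixed point, which is exactly the $Q$ asserted in the proposition.

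Next I would rewrite the learning rule as $\qfunc_{k+1}(\mathbf{s},\mathbf{u}) = \qfunc_k(\mathbf{s},\mathbf{u}) + \alpha_k(\mathbf{s},\mathbf{u})\big[(B_\gamma^{\qfunc}\qfunc_k)(\mathbf{s},\mathbf{u}) - \qfunc_k(\mathbf{s},\mathbf{u}) + w_k\big]$, with the step size set to zero on every coordinate not visited at iteration $k$. Since $\fdisc$ is a \emph{deterministic} transition map, the bracketed target is computed exactly and the noise term $w_k$ vanishes; the only stochasticity is \emph{which} coordinate is refreshed. Full-support episode initialization and exploration over $\mathbf{S}$ and $\mathbf{U}$, together with infinite exploration time, guarantee that each $(\mathbf{s},\mathbf{u})$ is updated infinitely often, so the per-coordinate step sizes inherit the hypotheses $\sum_k\alpha_k=\infty$ and $\sum_k\alpha_k^2<\infty$. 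Boundedness of the iterates follows from the Lipschitz-continuous $\targfunc,\consfunc$ being bounded on the finite set $\mathbf{S}$ and from $B_\gamma^{\qfunc}$ mapping the sup-ball of radius $\max\{\|\targfunc\|_\infty,\|\consfunc\|_\infty\}$ into itself. With contraction, Robbins--Monro step sizes, infinitely-often updating, and (trivially, noise-free) bounded-variance conditions all in hand, the theorem of~\cite{Tsitsiklis1994} yields $\qfunc_k\to Q$ with probability $1$.

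The main obstacle I anticipate is not the contraction estimate, which is essentially a restatement of Proposition~\ref{prop:contraction}, but the careful matching of the asynchronous update to the precise hypotheses of~\cite{Tsitsiklis1994}: one must verify that the information-delay and measurability conditions on the visitation process are met, and that setting $\alpha_k=0$ on unvisited coordinates is compatible with the requirement that the realized step-size sequence at each coordinate still satisfy the Robbins--Monro conditions almost surely. This reduces to showing that the exploration policy revisits every $(\mathbf{s},\mathbf{u})$ with the correct asymptotic frequency, which is precisely where the full-support assumption does the essential work.
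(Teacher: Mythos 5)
Your proposal is correct and takes essentially the same route as the paper: both proofs reduce the statement to the asynchronous Q-learning convergence theorem of \cite{Tsitsiklis1994}, using the contraction property established in Proposition~\ref{prop:contraction} to satisfy the contraction hypothesis (Assumption~5 there) and the full-support exploration together with the Robbins--Monro step-size conditions for the remaining assumptions. The only distinction is one of detail rather than substance---the paper's proof is a brief citation of Assumptions~1--3 and~5 and Theorem~3 of that reference, whereas you explicitly carry out the lift of the contraction from state-value space to state--action space, observe that the deterministic transition map makes the update noise-free, and verify boundedness of the iterates, all of which the paper leaves implicit.
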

\begin{proposition}[Convergent Value Function]
\label{prop:convergent_value}
In the limit of $\gamma \to 1$, the fixed point of DRABE~\eqref{eq:DRABE} converges to the fixed point of RABE~\eqref{eq:fixed_point_non_contractive} and is the solution of \eqref{eq:value_function_reach_avoid}.
\end{proposition}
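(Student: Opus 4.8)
The plan is to exploit the fact that the discounted operator is an explicit convex combination of the undiscounted one and a fixed function. Writing $B$ for the right-hand side of RABE~\eqref{eq:fixed_point_non_contractive}, direct inspection of \eqref{eq:DRABE} gives $B_\gamma[\valfunc]=\gamma\,B[\valfunc]+(1-\gamma)\max\{\targfunc,\consfunc\}$ for every $\valfunc$. Two elementary facts then drive everything. First, $B$ is monotone (order-preserving), since it is built from $\inf_\ctrl$, pointwise $\min$ with $\targfunc$, and pointwise $\max$ with $\consfunc$, all of which preserve the pointwise order. Second, $B[\valfunc]\le\max\{\targfunc,\consfunc\}$ for every $\valfunc$, because $\min\{\targfunc,\inf_\ctrl\valfunc(\state_+^\ctrl)\}\le\targfunc$. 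Consequently $B_\gamma[\valfunc]\ge B[\valfunc]$, and, comparing two discount factors, $B_\gamma[\valfunc]$ is nonincreasing in $\gamma$, since increasing $\gamma$ shifts weight from the larger term $\max\{\targfunc,\consfunc\}$ onto the smaller term $B[\valfunc]$.

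First I would establish monotone convergence of the family $\{\valfunc_\gamma\}$. Assuming $\targfunc,\consfunc$ are bounded on $\xset$ (e.g.\ a compact domain), the fixed point satisfies $\consfunc\le\valfunc_\gamma\le\max\{\targfunc,\consfunc\}$: the upper bound follows from $B[\valfunc_\gamma]\le\max\{\targfunc,\consfunc\}$ and the lower bound from $B[\valfunc_\gamma]\ge\consfunc$. For $\gamma_1<\gamma_2$, the point $\valfunc_{\gamma_2}$ is a subsolution of the monotone contraction $B_{\gamma_1}$, i.e.\ $B_{\gamma_1}[\valfunc_{\gamma_2}]\ge B_{\gamma_2}[\valfunc_{\gamma_2}]=\valfunc_{\gamma_2}$; iterating $B_{\gamma_1}$ and using monotonicity then yields $\valfunc_{\gamma_1}\ge\valfunc_{\gamma_2}$. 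Hence $\gamma\mapsto\valfunc_\gamma$ is pointwise nonincreasing and bounded below, so it converges pointwise as $\gamma\to1^-$ to some $\valfunc_\infty$ with $\consfunc\le\valfunc_\infty\le\max\{\targfunc,\consfunc\}$.

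Next I would show $\valfunc_\infty$ is a fixed point of RABE and then pin it down as the value function by a two-sided bound. Passing to the limit in $\valfunc_\gamma=\gamma B[\valfunc_\gamma]+(1-\gamma)\max\{\targfunc,\consfunc\}$ requires $B[\valfunc_\gamma]\to B[\valfunc_\infty]$; the only non-trivial term is $\inf_\ctrl\valfunc_\gamma(\state_+^\ctrl)$, and since infima commute, $\lim_\gamma\inf_\ctrl\valfunc_\gamma(\state_+^\ctrl)=\inf_\ctrl\inf_\gamma\valfunc_\gamma(\state_+^\ctrl)=\inf_\ctrl\valfunc_\infty(\state_+^\ctrl)$, while the remaining $\min$/$\max$ with the continuous $\targfunc,\consfunc$ pass through, giving $\valfunc_\infty=B[\valfunc_\infty]$. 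For the lower bound, the value function $\valfunc$ of \eqref{eq:value_function_reach_avoid} itself satisfies RABE (Appendix) and $\valfunc\le\max\{\targfunc,\consfunc\}$ (take $\tau=0$ in \eqref{eq:cost_functional_reach_avoid}), so $B_\gamma[\valfunc]=\gamma\valfunc+(1-\gamma)\max\{\targfunc,\consfunc\}\ge\valfunc$; thus $\valfunc$ is a subsolution of $B_\gamma$, $\valfunc_\gamma\ge\valfunc$, and $\valfunc_\infty\ge\valfunc$. For the matching upper bound I would unroll $\valfunc_\infty=B[\valfunc_\infty]$ along a near-optimal control signal $\csig$: fixing a finite time $\tau^\ast$ with $\min_{\tau\le\tau^\ast}\max\{\targfunc(\traj_\state^\csig(\tau)),\max_{\kappa\le\tau}\consfunc(\traj_\state^\csig(\kappa))\}$ within $\epsilon$ of $\outcome^\csig(\state)$, repeated substitution of $B$ together with the terminal bound $\valfunc_\infty\le\max\{\targfunc,\consfunc\}$ collapses the nested $\min$/$\max$ into exactly this finite-horizon payoff, so $\valfunc_\infty(\state)\le\outcome^\csig(\state)+\epsilon$; taking the infimum over $\csig$ and $\epsilon\to0$ gives $\valfunc_\infty\le\valfunc$. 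Combining the two bounds yields $\valfunc_\infty=\valfunc$, hence $\valfunc_\gamma\to\valfunc$.

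The main obstacle is this last step: because RABE is not contractive it may admit spurious fixed points, so convergence of $\valfunc_\gamma$ to \emph{some} fixed point is not by itself enough. The real work is the unrolling argument forcing $\valfunc_\infty\le\valfunc$, which hinges on the terminal bound $\valfunc_\infty\le\max\{\targfunc,\consfunc\}$ and on truncating the outer minimum in \eqref{eq:cost_functional_reach_avoid} at a finite horizon. The ``mild technical assumptions'' enter precisely here, to guarantee that this truncation is legitimate (a near-optimal control attains the reach-avoid payoff arbitrarily well at some finite $\tau^\ast$) and that the value function is indeed a fixed point of RABE.
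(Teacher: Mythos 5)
Your proof is correct, but it takes a genuinely different route from the paper's. The paper argues trajectory-wise: it unrolls the DRABE fixed point into an explicit discounted outcome functional $\outcome_\gamma^{\csig}(\state)$ (a nested $(1-\gamma)/\gamma$ convex combination of $\max\{\targfunc_\tau,\consfunc_\tau\}$ terms along the trajectory), shows that pointwise $\lim_{\gamma\to1}\outcome_\gamma^{\csig}(\state)=\outcome^{\csig}(\state)$ by collapsing the nested $\min$/$\max$ structure back into the payoff \eqref{eq:cost_functional_reach_avoid}, and then concludes $\lim_{\gamma\to1}\valfunc_\gamma(\state)=\inf_{\csig}\outcome^{\csig}(\state)=\valfunc(\state)$ --- tacitly interchanging the limit in $\gamma$ with the infimum over control signals. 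Your operator-level argument --- the decomposition $B_\gamma[\valfunc]=\gamma B[\valfunc]+(1-\gamma)\max\{\targfunc,\consfunc\}$ with $B$ the RABE right-hand side, monotonicity of $B$, the subsolution comparison giving $\valfunc_\gamma$ pointwise nonincreasing in $\gamma$, identification of the monotone limit $\valfunc_\infty$ as a fixed point of \eqref{eq:fixed_point_non_contractive}, and the two-sided sandwich pinning $\valfunc_\infty=\valfunc$ --- is longer but supplies exactly the justifications the paper leaves implicit. In particular, you correctly diagnose the crux: since RABE is non-contractive it may admit multiple fixed points, so convergence of $\valfunc_\gamma$ to \emph{some} fixed point proves nothing by itself; your finite-horizon unrolling (using the terminal bound $\valfunc_\infty\le\max\{\targfunc,\consfunc\}$) to force $\valfunc_\infty\le\valfunc$, together with the subsolution argument $\valfunc_\gamma\ge\valfunc$ from $\valfunc\le\max\{\targfunc,\consfunc\}$ (the $\tau=0$ truncation of \eqref{eq:cost_functional_reach_avoid}), closes that gap explicitly, whereas the paper sidesteps it by identifying $\valfunc_\gamma$ directly with $\inf_{\csig}\outcome_\gamma^{\csig}$ --- there the limit-infimum exchange plays the analogous role and could itself be justified by the same monotonicity in $\gamma$ that you establish at the operator level. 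The tradeoff: the paper's proof is short and makes the probabilistic-discounting interpretation of \eqref{eq:DRABE} transparent; yours requires an explicit boundedness assumption on $\targfunc,\consfunc$ (and uniqueness of $\valfunc_\gamma$ from Proposition~\ref{prop:contraction}) but is self-contained and robust to the multiplicity of RABE fixed points, and as a byproduct it re-derives the monotone family $\valfunc_{\gamma_1}\ge\valfunc_{\gamma_2}$ for $\gamma_1\le\gamma_2$ that the paper obtains separately in the proof of Theorem~\ref{thm:underapproximation_convergent_family}.
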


\subsection{Discounted Reach-Avoid Sets}

In Section \ref{sec:contraction_Q_learning}, we have shown that Q-learning methods converge to a fixed point.
Another crucial property is the connection between the discounted ($0 \leq \gamma < 1$) and the exact ($\gamma = 1$) reach-avoid set.
The following result shows that the discounted reach-avoid set defined by the discounted reach-avoid value function is always an under-approximation and asymptotically approaches the true reach-avoid set as $\gamma$ approaches $1$.

\begin{theorem}[Conservativeness of Discounted Reach-Avoid Set]
\label{thm:underapproximation_convergent_family}
For any discount factors $0 \leq \gamma_1 \leq \gamma_2 < 1$, the reach-avoid set associated with $\gamma_1$ is a subset of the reach-avoid set associated with $\gamma_2$. 
Moreover, as long as $\reachavoid$ has a nonempty interior and $\valfunc$ is nowhere locally constant on its zero level set, $\reachavoidDiscount$ asymptotically approaches $\reachavoid$, under the Hausdorff set distance, as $\gamma \rightarrow 1$.
\end{theorem}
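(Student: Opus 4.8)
The plan is to treat the two claims separately: the nesting $\reachavoid_{\gamma_1}\subseteq\reachavoid_{\gamma_2}$ I would get from a pointwise monotonicity of the discounted value functions in $\gamma$, and the Hausdorff convergence I would get from the pointwise convergence $\valfunc_\gamma\to\valfunc$ (Proposition~\ref{prop:convergent_value}) together with a topological regularity of $\reachavoid$ extracted from the two hypotheses.

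For the nesting, write the backup operator as $B_\gamma[\valfunc]=(1-\gamma)M+\gamma\,N[\valfunc]$, where $M(\state):=\max\{\targfunc(\state),\consfunc(\state)\}$ and $N[\valfunc](\state):=\max\{\consfunc(\state),\min\{\targfunc(\state),\min_{\ctrl\in\cset}\valfunc(\state_+^\ctrl)\}\}$. Two elementary facts drive everything. First, $B_\gamma$ is order preserving in $\valfunc$ (it is built from $\min$, $\max$, and a nonnegative combination). Second, $N[\valfunc]\le M$ pointwise, since $\max\{\consfunc,\min\{\targfunc,\cdot\}\}\le\max\{\consfunc,\targfunc\}=M$; hence for fixed $\valfunc$ the map $\gamma\mapsto B_\gamma[\valfunc]$ is non-increasing (a convex combination that puts more weight on the smaller term as $\gamma$ grows). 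Combining these, for $\gamma_1\le\gamma_2$ I would write $\valfunc_{\gamma_2}=B_{\gamma_2}[\valfunc_{\gamma_2}]\le B_{\gamma_1}[\valfunc_{\gamma_2}]$, so $\valfunc_{\gamma_2}$ is a subsolution of the contraction $B_{\gamma_1}$; iterating the monotone operator $B_{\gamma_1}$ from $\valfunc_{\gamma_2}$ produces an increasing sequence converging (Proposition~\ref{prop:contraction}) to $\valfunc_{\gamma_1}$, whence $\valfunc_{\gamma_1}\ge\valfunc_{\gamma_2}$ pointwise. Taking zero sublevel sets gives $\reachavoid_{\gamma_1}\subseteq\reachavoid_{\gamma_2}$. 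Letting $\gamma_2\to1$ and invoking Proposition~\ref{prop:convergent_value} yields $\valfunc_\gamma\ge\valfunc$, i.e.\ $\reachavoidDiscount\subseteq\reachavoid$ for every $\gamma$ (the claimed under-approximation).

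For the Hausdorff statement I would work inside the compact state space $\xset$, so that $\reachavoid$ is compact. Since $\reachavoidDiscount\subseteq\reachavoid$, the one-sided distance $\sup_{\state\in\reachavoidDiscount}d(\state,\reachavoid)$ vanishes and it suffices to show $\sup_{\state\in\reachavoid}d(\state,\reachavoidDiscount)\to0$. The geometric lemma I would isolate is the regular-closedness $\reachavoid=\overline{\{\state:\valfunc(\state)<0\}}$: every point of $\reachavoid$ (in particular every boundary point, where $\valfunc=0$) is a limit of strict-interior points with $\valfunc<0$. Granting this, I would argue by contradiction: if the supremum did not vanish there would be $\eta>0$, $\gamma_n\to1$, and $\state_n\in\reachavoid$ with $d(\state_n,\reachavoid_{\gamma_n})\ge\eta$; by compactness pass to $\state_n\to\state^\star\in\reachavoid$, pick (by regularity) a point $\state'$ with $|\state'-\state^\star|<\eta/2$ and $\valfunc(\state')<0$, and use the pointwise convergence $\valfunc_{\gamma_n}(\state')\to\valfunc(\state')<0$ to conclude $\state'\in\reachavoid_{\gamma_n}$ for large $n$; then $d(\state_n,\reachavoid_{\gamma_n})\le|\state_n-\state'|\to|\state^\star-\state'|<\eta$, a contradiction. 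Notably this needs only pointwise (not uniform) convergence of $\valfunc_\gamma$.

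The main obstacle is the regularity lemma $\reachavoid=\overline{\{\valfunc<0\}}$, which is exactly where the two hypotheses must be spent. Continuity of $\valfunc$ (from the Lipschitz data and dynamics) gives the easy inclusion $\overline{\{\valfunc<0\}}\subseteq\reachavoid$ and shows interior points with $\valfunc<0$ cause no trouble; the delicate direction is that no boundary point is a ``one-sided'' zero, i.e.\ a local minimum of $\valfunc$ at level $0$ approached only from $\{\valfunc\ge0\}$ (such an isolated or plateaued zero would sit in $\reachavoid$ yet stay bounded away from every $\reachavoidDiscount$, breaking Hausdorff convergence). I expect the nonempty-interior hypothesis to rule out degenerate, lower-dimensional components and the ``nowhere locally constant on the zero level set'' hypothesis to forbid zero plateaus, so that together they force $\{\valfunc=0\}=\partial\reachavoid$ to be genuinely crossed by the strict sublevel set; pinning down precisely how these two topological hypotheses combine to yield $\reachavoid=\overline{\{\valfunc<0\}}$ is the step that will require the most care.
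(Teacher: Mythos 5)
Your proposal is correct and follows essentially the same route as the paper's proof, but your execution is tighter at exactly the two points the paper glosses over, so a brief comparison is worthwhile. For the nesting, the paper's four-case enumeration is nothing more than a verification of your one-line dominance $\max\{\consfunc,\min\{\targfunc,\cdot\}\}\le\max\{\consfunc,\targfunc\}$, after which the paper simply \emph{asserts} $\valfunc_{\gamma_1}\ge\valfunc_{\gamma_2}$; your subsolution argument ($\valfunc_{\gamma_2}=B_{\gamma_2}[\valfunc_{\gamma_2}]\le B_{\gamma_1}[\valfunc_{\gamma_2}]$, then monotone iteration of $B_{\gamma_1}$ up to its unique fixed point) supplies the operator-to-fixed-point comparison step that the paper leaves implicit, and is the cleaner way to justify it. For the Hausdorff claim, the paper argues directly: given $\delta>0$ it posits a single $\epsilon>0$ such that near any $x$ with $\valfunc(x)\le 0$ there is $y$ with $\|x-y\|\le\delta$ and $\valfunc(y)<-\epsilon$, then chooses $\gamma$ with $|\valfunc_\gamma(y)-\valfunc(y)|<\epsilon$ --- a step that tacitly requires uniformity in $y$; your sequential compactness-and-contradiction argument reaches the same conclusion using only pointwise convergence $\valfunc_{\gamma_n}(\state')\to\valfunc(\state')$ at a single well-chosen point, which is exactly the right repair. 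Finally, the step you honestly flag as open --- deriving the regularity $\reachavoid=\overline{\{\valfunc<0\}}$ from the nonempty-interior and nowhere-locally-constant hypotheses --- is precisely the step the paper itself asserts without proof (``Provided that $\valfunc$ is nowhere locally constant\dots we have that for any $\delta>0$, there exists $\epsilon>0$\dots''), so you are not missing anything relative to the paper; indeed your caution is well founded, since ``nowhere locally constant on the zero level set'' by itself does not exclude a boundary point of $\reachavoid$ approached only from $\{\valfunc\ge 0\}$ (an isolated zero of $\valfunc$ away from the bulk of $\reachavoid$ would satisfy both stated hypotheses yet defeat Hausdorff convergence), a degenerate situation the paper's proof passes over silently.
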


The proof can be found in Appendix~\ref{appendix_fixed_point}. Theorem \ref{thm:underapproximation_convergent_family} states that any increasing sequence of discount factors approaching $1$ has an associated sequence of reach avoid-sets where each set contains all of its predecessors. In addition, as $\gamma$ tends to $1$, the sequence of discounted reach-avoid sets approaches the true reach-avoid set. These features are particularly appealing for \textit{safe learning} since any discount factor will yield a conservative but safe under-approximation of the reach-avoid set. In other words, the value function should never wrongly predict that it can safely reach the target when in fact it collides. This property makes the value function for any $\gamma$ a good candidate for \textit{shielding} methods~\cite{fisac2018general,bastani2019safe}, where the agent is allowed to explore the environment until safety or liveness are at stake. Thus, any discounted reach-avoid set yields a permissible region of exploration and an associated emergency controller through the value function.  

\begin{example}[2-D point particle]
    Fig. \ref{fig:exp_convergentFamily} demonstrates the convergent family of under-approximate reach-avoid sets obtained by different Q-Learning methods, which are tabular Q-learning (TQ) \cite{Watkins1992a} and double deep Q-network (DDQN) \cite{van2015DDQN}
    \footnote{For TQ, We use a grid of $81 \times 241$ cells, while for DDQN, we describe the architecture and activation in Appendix \ref{appendix_traing}.}.
    First, we look at the discounted reach-avoid sets obtained by TQ as shown in the first row of Fig. \ref{fig:exp_convergentFamily}.
    We observe the boundary of the discounted reach-avoid set is very close to the target set initially.
    However, as $\gamma$ gradually annealing to 1, it approaches the undiscounted reach-avoid set.
    More importantly, the discounted reach-avoid sets are always under-approximations, which means the proposed method guarantees safety and liveness more conservatively.
    Secondly, we compare the reach-avoid sets by DDQN with TQ.
    Similarly, the discounted reach-avoid sets are generally under-approximations with some errors near the boundary.
    This result suggests that deep RL methods manage to approximate reach-avoid games' value function as long as we slowly anneal discount rate to 1.
\end{example}

\section{Results and Validation}
\label{sec:exp}

In this section we present the results of implementing our proposed Discounted Reach-Avoid Bellman Equation on a 2-D point particle problem, 3-D Dubins car problem, a 6-D Lunar Lander problem and a 6-D attack-defense game.
We use DDQN to learn the approximate value function and we describe the architecture, training hyper-parameters and initialization of the NN in Appendix \ref{appendix_traing}.

\subsection{2-D point particle: Sum of Costs versus Reach-Avoid}
This first problem employs the same dynamic system in our running example.
We compare reinforcement learning directly optimizing the discounted reach-avoid cost (RA) against reinforcement learning minimizing the usual sum of discounted costs (Sum).
We consider a sparse cost for the usual reinforcement learning, i.e., $c(\state, \ctrl) = -1$ if $\state_+^\ctrl \in \target$, $c(\state, \ctrl) = \rho$ if $\state_+^\ctrl \in \failure$, and zero elsewhere.
Here, we fix $\gamma$ to $0.95$ in Sum training and $0.9999$ in RA training. Rollouts end when the agent hits the outer square boundary. This termination condition we denote as ``End''.


\begin{figure}[t]
    \centering
    \includegraphics[width=0.9\columnwidth]{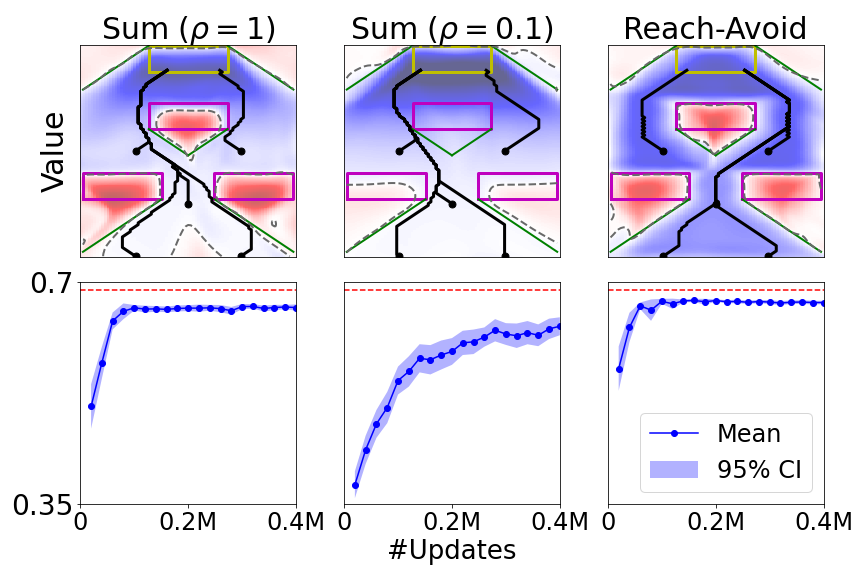}
    \caption{First row: Value function after $400,000$ gradient updates. Second row: Fraction of episodes reaching the target set without visiting the failure set as training proceeds. Each run is an average over $1281$ episodes with initial states from the discretized state space. The statistics is computed from $50$ independent runs. The dashed red line specifies the success ratio of the chosen validation states based on the analytic reach-avoid set.}
    \label{fig:exp_zermelo_compare}
\end{figure}
Fig. \ref{fig:exp_zermelo_compare} shows the learned state value function as well as the training progress.
The state value function shows that RA conforms with the safety and liveness specification.
Thus, the state value is the same if, from that state, the control policy can navigate to the center of the target set.
In addition, the trajectory rollouts show that the policy keeps the distance from the failure set since that distance influences the outcome.
On the other hand, Sum can have a decent reach-avoid set only if the penalty is adequately specified ($\rho=1$).
Even if the state value is similar to RA's, the trajectory rollouts do often come close to the failure set, which is not desirable.
The training progress suggests that RA has a better (asymptotic) success ratio than Sum with an adequate penalty, and RA converges much faster than Sum with a low penalty ($\rho=0.1$).
RA provides an easy way to formulate the cost because it only requires $\targfunc$ and $\consfunc$ having the property in \eqref{eq:implicit_target} and \eqref{eq:implicit_failure}, while Sum requires careful tuning of the cost structure.


\begin{figure}[!t]
    \centering
    \includegraphics[width=0.9\columnwidth]{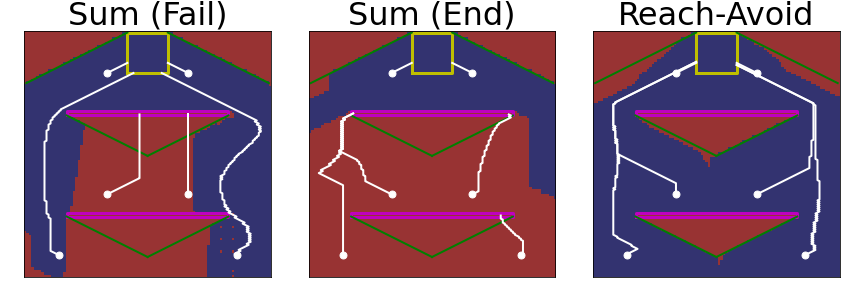}
    \caption{Binary reach-avoid outcome from executing the learned policies, for different objective formulations, after $400,000$ gradient updates. White lines show trajectory rollouts from six initial states. The red region indicates failed rollouts from the corresponding position states, while blue region indicates rollouts that reach the target. The first two figures correspond to reinforcement learning formulations with different rollout termination conditions. The last figure corresponds to our reach-avoid formulation.}
    \label{fig:exp_show}
\end{figure}
To show that an appropriate cost structure is sometimes challenging to find, we construct a new environment with two thin obstacles and one target while keeping the system dynamics the same.
Fig. \ref{fig:exp_show} shows the binary reach-avoid outcomes of trajectory rollouts.
The rightmost figure shows that RA has the correct rollout outcomes except for some regions around the top reach-avoid set boundary.
However, Sum with ``End'' termination only has the correct rollout outcomes for the region near the target.
In order to enforce safety to a larger degree in the Sum formulation, we terminate the training episode if the agent collides with the obstacles. We denote this new rollout termination condition as ``Fail'', shown in the leftmost figure.
We observe that Sum (Fail) has more correct outcomes than Sum (End), especially on the two sides of the environment.
However, in the central region, blocked by two thin obstacles, the learned policies fail to evade obstacles.
This result is possibly due to the abrupt value transition from the top boundary of the obstacles, so the negative cost is difficult to be propagated to the region under the obstacles.
The reason we observe these issues with the Sum formulation is that these objectives are only \emph{proxies} for the property we need our system to satisfy, whereas the reach-avoid objective is an exact encoding of it.
In other words, regardless of the environment configuration, the reach-avoid policy will be optimizing for ``the right thing''.
In this sense, the reach-avoid objective provides a more robust generalization across environments.

\subsection{3-D Dubins Car: Q-Learning as an Untrusted Oracle}
\begin{figure}[!t]
    \centering
    \includegraphics[width=0.8\columnwidth]{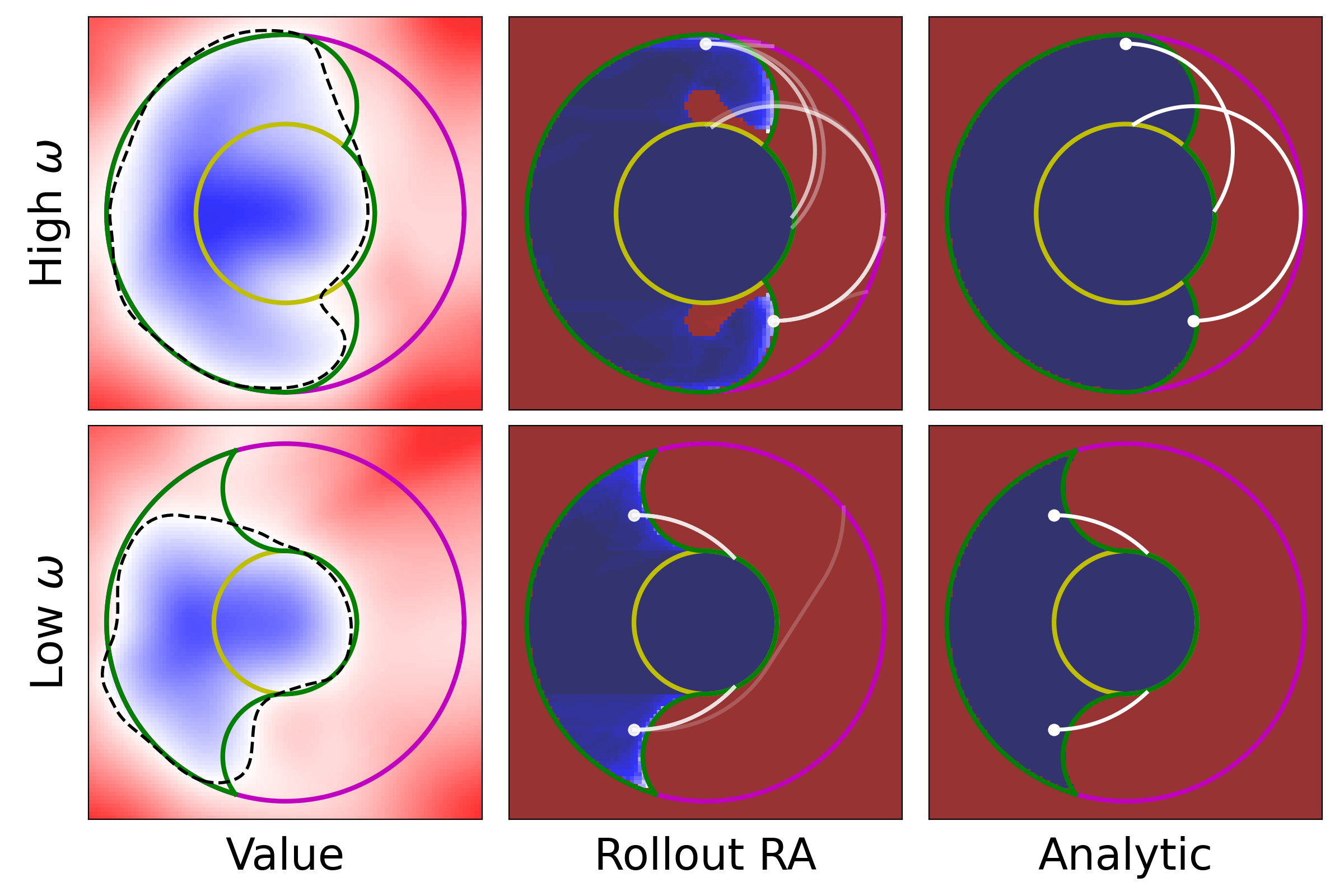}
    \caption{Value function learned after $400,000$ gradient updates. The reach-avoid set in the second column is the average success rate over $50$ models trained on different random seeds. White lines show trajectory rollouts of first $10$ models from two different initial states. The reach-avoid set in the third column is derived from geometric analysis and the trajectories are simply by constant angular velocity.}
    \label{fig:exp_car}
\end{figure}
The dynamics of the Dubins car are described by the following equation
\begin{IEEEeqnarray}{c}
\label{eq:dubinsCar_dynamics}
    \dot{x} = v \cos \theta,\
    \dot{y} = v \sin \theta,\ 
    \dot{\theta} = \ctrl,
\end{IEEEeqnarray}
where $\ctrl \in \{ \omega, 0, -\omega \}$.
In this environment, the state of the system is $\state = [x, y, \theta]$, with $x,y$ encoding position and $\theta$ encoding the heading.
In this environment we constrain the car to lie within the outer (magenta) circle and the goal is for the car to reach the inner (yellow) circle as shown in Fig.~\ref{fig:exp_car}.
The implicit surface functions here are defined as $\targfunc(\state) := \| \state \| - r$ and $\consfunc(\state) := \| \state \| - R$, where $r$ and $R$ are the inner and outer radii, respectively.
We construct two types of Dubins cars: (1) one with high turning rate ($r \geq 2v/\omega - R,\ v=0.5, \omega=0.833, R=1, r=0.5$) and (2) a second one with low turning rate ($r < 2v/\omega - R,\ v=0.5, \omega=0.667, R=1, r=0.4$). The results are shown in the first and the second row of Fig.~\ref{fig:exp_car}, respectively.

The left column of Fig.~\ref{fig:exp_car} shows a slice of the learned value function for a $0^\circ$ heading angle.
We first observe that the learned value function does not accurately approximate the analytic reach-avoid value function (rightmost column of Fig.~\ref{fig:exp_car}).
We take a closer look at the misclassified states by computing the confusion matrix between the predicted value function and the obtained outcome when rolling out the learned policy.
The condition is ``success" (``failure") when the learned policy rollout value is negative (positive) and the predicted condition is ``success" (``failure") when the DDQN value is negative (positive).
We obtain $6.6\%$ false-success rate (FSR)  
and $5.1\%$ false-failure rate (FFR)  
for the car with a high turning rate, while we get $7.9\%$ FSR and $2.5\%$ FFR for the car with a low turning rate.
Here we emphasize FSR because these are states from which we predict to reach the target set without entering the failure set, but the learned policy fails to deliver on this promise.

While the value function computed through deep Q-learning may not, by itself, constitute a reliable safety-liveness indicator, we can instead treat it as an \emph{untrusted oracle} encoding a best-effort reach-avoid policy.
Provided that we have the ability to simulate the dynamics of the system accurately (or with a nontrivial error bound), we can obtain an improved value prediction for any query state by rolling out the learned policy and observing its outcome.
In fact, this \emph{rollout value} has two practically useful properties: first, it is accurate (to the degree of fidelity allowed by the simulator); and second, it encodes the outcome achieved by the \emph{actually available} best-effort policy.
If this policy is substantially worse than the optimal one (due to poor learning, for example) the resulting rollout value will predict worse performance than could otherwise have been achieved, but it will nonetheless predict it \emph{accurately} for the learned policy. Even when small inaccuracies exist in the simulator, we can generally expect the FSR to be drastically lower than for the DDQN value prediction.

We minimize DDQN-predicted state-action values to obtain the DDQN ``untrusted oracle'' policy, and use this policy to obtain the rollout value, hence the approximate reach-avoid set.
The central column of Fig.~\ref{fig:exp_car} shows the average reach-avoid outcome across $50$ models trained on different random seeds if we use rollout value, while the rightmost column shows the ground truth analytic reach-avoid set obtained by geometric means.

This untrusted oracle approach enables us to translate our learned reach-avoid solution into a supervisory control law with a recursive feasibility guarantee, by employing a ``shielding'' scheme~\cite{bastani2019safe}.
Suppose that we start inside the rollout reach-avoid set (blue region in the central column of Fig.~\ref{fig:exp_car}).
Before we execute a candidate action from any policy (e.g. from a performance-driven reinforcement learning agent), we simulate a short trajectory forward to see if we would remain in the reach-avoid set after this action.
If the simulated rollout is successful, then we are free to execute the candidate action;
otherwise, we can use the reach-avoid policy to steer away from the reach-avoid set boundary, and thus maintain safety and liveness.
Using the untrusted oracle policy, we get a FFR of $4.8\%$ for high turn rate and $2.5\%$ for low turn rate, and a FSR of exactly $0$, by construction (thereby ensuring the success of shielding schemes provided we have an accurate simulator available at runtime).

\subsection{6-D Lunar Lander}

In this section we will showcase the algorithm on the lunar lander environment from OpenAI gym~\cite{brockman2016openai}. In this environment the state of the system is given by the six dimensional vector $\state := [x,y,\theta,\dot{x},\dot{y},\dot{\theta}]$, and the lander has four available actions at its disposal: to activate the left, right or main engine, or to apply no action.

Let us define the minimum signed $L_2$ distance between a point $[x,y]^T \in \mathbb{R}^2$ and a generic set $\mathcal{S}$ as
\begin{align}
    L_2(x,y;\mathcal{S}) =& ~ c \Big( \min_{[\hat{x},\hat{y}] \in \partial \mathcal{S}}     \sqrt{(x-\hat{x})^2 + (y-\hat{y})^2}\Big),
\end{align}
where $c = -1$ if $[x,y]^T \in \mathcal{S}$ and $c = +1$ otherwise, and $\partial \mathcal{S}$ is the boundary of the set.

The safety margin is defined by the minimum signed $L_2$ distance from the center of the lander to the moon surface, as well as the minimum distance to the left, right and top boundaries. Together, these four components define a polygon $\mathcal{P}_\mathcal{C} \subset \mathbb{R}^2$ which demarcates the constraint set. The \textit{safety margin} is thus defined as $g(s) = -L_2(x,y;\mathcal{P}_\mathcal{C})$.

The target set $\target$ is defined as a rectangular region within the constraint set (i.e. $\target \subset \mathcal{P}_\mathcal{C}$) that we want to reach without colliding. The \textit{target margin} is similarly defined as the minimum signed $L_2$ distance to $\target$, that is $l(s) = L_2(x,y;\target)$.


In our experiments we run the environment for 5 million gradient updates. Fig.~\ref{fig:slices_values_function} shows slices of the value function for different values of $v_x$ and $v_y$, while keeping $\theta$ and $\dot\theta$ constant. In this case the system is too high-dimensional to be able to compare the value function to a ground-truth. However, Fig. \ref{fig:slices_values_function} shows the correct qualitative behavior of our solution, with leftward and rightward initial speeds causing the value function to extend horizontally from the boundary and the obstacle. Similarly, initial vertical speeds cause the value function to be positive near the bottom and top of the environment. For these experiments we find the FSR and FFR to be $20 \%$ and $4\%$ respectively.

\begin{figure}[!t]
    \centering
    \includegraphics[width=0.85\columnwidth]{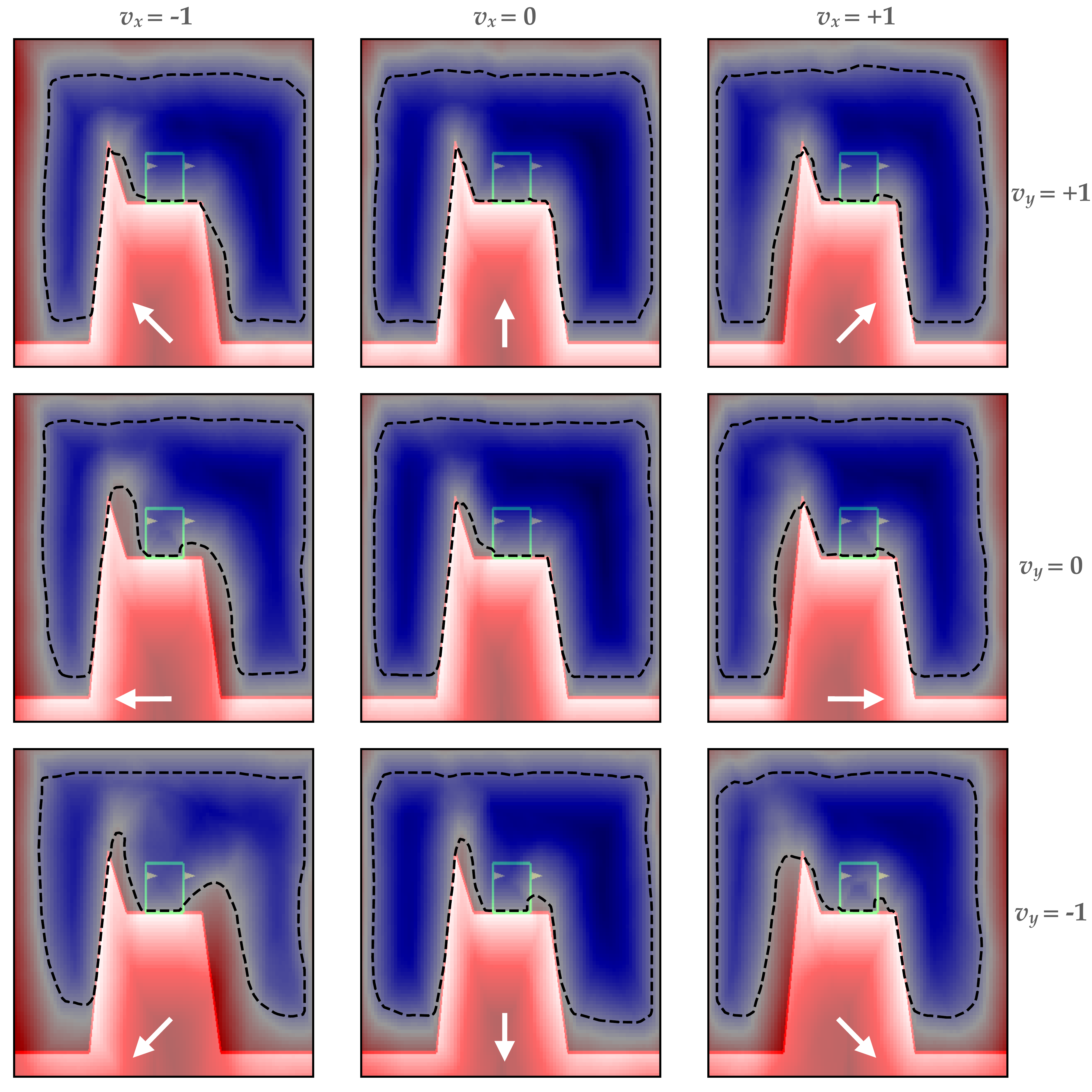}
    \caption{Slices of the value function for different values of $v_x$ and $v_y$, with $\theta = 0$ and $\dot{\theta} = 0$. Top, middle and bottom rows correspond to $v_y=1,0,-1$ respectively. First, second and last column represent $v_x=-1,0,1$ respectively. The dashed line denotes the zero level set, and the arrows denote the direction of the initial speed.}
    \label{fig:slices_values_function}
\end{figure}

\subsection{6-D Attack-Defense Game with Two Dubins Cars}

\begin{figure}[!t]
    \centering
    \begin{minipage}[c]{0.3\columnwidth}
        \centering
        \includegraphics[width=\columnwidth]{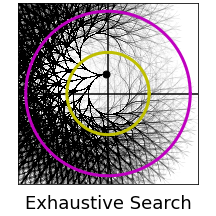}
        \subcaption{}
    \end{minipage}
    \begin{minipage}[c]{0.6\columnwidth}
        \centering
        \includegraphics[width=\columnwidth]{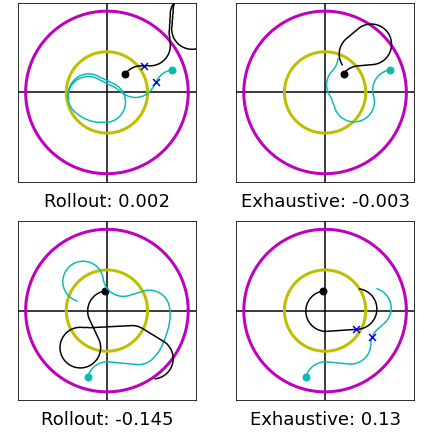}
        \subcaption{}
    \end{minipage}
    \caption{Validation of the \textit{defender's} policy. (a): exhaustive trajectories. (b): DDQN rollout and the worst case in exhaustive trajectories. First row: false failure; Second row: false success. Black lines show the defender's trajectories, while green lines show the attacker's trajectories. Blue x's specify the instant when the attacker is captured by the defender.}
    \label{fig:exp_pe}
\end{figure}
In this experiment, we consider attack-defense games between two identical Dubins cars.
One car, \textit{the attacker}, wants to reach the yellow target without hitting the magenta failure set boundary.
On the other hand, the other car, \textit{the defender}, attempts to catch the attacker.
We consider the game is won if the attacker manages to reach their goal before they are caught by the defender. An example is shown in Fig. \ref{fig:exp_pe}.
In the following paragraphs, we will use subscript $A$ to denote variables related to the attacker and $D$ to the defender.
The state of the system in this experiment is $\state = (\state_A, \state_D)$ and the dynamics of the Dubins car follow \eqref{eq:dubinsCar_dynamics} with $v=0.75, \omega=3, R=1, r=0.5$.
The target set and failure set for this experiment are defined below.
\begin{IEEEeqnarray}{c}
\label{eq:car_pe_target}
   \target := \{ (\state_A, \state_D) \ | \ \state_A \in \target_A \}, \\
   \failure := \{ (\state_A, \state_D) \ | \ \state_A \in \failure_A \vee (\state_A, \state_D) \in \failure_J \},
\end{IEEEeqnarray}
where $\failure_J := \{ (\state_A, \state_D) \ | \ \| \state_A - \state_D \|_2 \leq \beta \}$ and $\beta$ is the capture range.
We use $\beta = 0.25$ in this experiment.
Their implicit surface functions are as follows.
\begin{IEEEeqnarray}{c}
    \targfunc(\state) = \| \state_A \|_2 - r 
    \nonumber \\
    \consfunc(\state) := \max \Big\{ \| \state_A \|_2 - R, \ \beta - \| \state_A - \state_D \|_2 \Big\} \nonumber.
\end{IEEEeqnarray}
We learn the minimax player policies by replacing $\min_{\ctrl \in \cset} \valfunc( \state_+^{\ctrl})$ in \eqref{eq:DRABE} with $\min_{\ctrl_A \in \cset} \max_{\ctrl_D \in \cset} \valfunc( \state_+^{\ctrl_A, \ctrl_D})$.
With this modification, the output of the DDQN becomes a matrix indexed by controls from the attacker and defender. 
We evaluate the minimax player policies by DDQN values in two ways: (1) \textit{estimation error}, how well we learn from the data, and (2) \textit{approximation error}, how closed the minimax policy is to the (conservatively) optimal policy.

For estimation error, we sample $225$ attacker positions uniformly in the ring and $225$ defender positions uniformly inside the constraint set.
In addition, we uniformly sample $15$ heading angles in $[0,\ 2\pi]$ for both attacker and defender.
For these $15^6$ state samples, we execute the policy by DDQN values for $100$ time steps. 
We compare this rollout value with the DDQN predicted minimax value and got $6.7\%$ FFR and $23.3\%$ FSR.

For approximation error, we want to check if the defender's policy is optimal by two-round validation.
In each round, we divide $50$ time steps into ten intervals, and within each interval, the defender adopts the same action.
Therefore, we have $3^{10}$ different action sequences, as shown in Fig. \ref{fig:exp_pe} (a).
We obtain the rollout by letting the attacker follow the DDQN policy, but the defender follows the artificial action sequences.
We rank exhaustive trajectories by (1) crossing the constraint or captured, (2) unfinished, and (3) succeeding.
We pick the worst trajectory in the attacker's viewpoint and record the rollout value as the exhaustive value.
If the worst trajectory from the first-round is unfinished, we conduct the second-round validation from the end-point of that trajectory.

We sample $500$ initial states from the states with negative and positive DDQN rollout value, and we obtain $33\%$ and $1.2\%$ error rate, respectively.
Fig. \ref{fig:exp_pe} (b) shows trajectories by DDQN rollout and exhaustive search with their payoff, which is defined in \eqref{eq:cost_functional_reach_avoid}.
The first row of Fig. \ref{fig:exp_pe} (b) shows a rare example in which the attacker succeeds in validation but fails in the DDQN rollout.
This discrepancy is due to restricting the defender to maintaining the same action within each short time interval.
However, we note that we have very low error rate for states of positive DDQN rollout value.
In this case, even though the defender fails in all exhaustive trajectories, the attacker barely escapes with $-0.02$ average exhaustive rollout value.
The second row of Fig. \ref{fig:exp_pe} (b) shows that the attacker fails in validation but succeeds in the DDQN rollout.
We make the following two observations:
(1) the defender in the DDQN rollout chooses to chase the attacker right away, while in validation, the defender chooses to block the path for the attacker to enter into the target set.
This result suggests that the learned policy may not be optimal.
(2) it seems possible that the attacker can reach the target set before the defender can get in the way.
However, the defender adopts some unintuitive action sequence, e.g., a clockwise turn in the bottom left.
This action results in a infrequently visited joint state, at which the neural network does not produce an accurate approximate value.
This indicates that our ``exhaustive'' validation is able to expose vulnerabilities in the learned attacker policy.
\section{Conclusion} \label{sec:conclusion}
In this paper we have presented a learning approach for computing solutions to reach-avoid optimal control problems. We first presented the optimality condition for the value function of reach-avoid problems and how it does not naturally induce a contraction mapping in the space of value functions. Upon introducing a discount term, we then constructed a discounted version of the optimality condition, or DRABE, which does induce a contraction. Notably, we show that using the DRABE operator we are able to recover conservative approximations of the exact reach-avoid set for any $\gamma \leq 1$. We then proceeded to use this discounted update in tabular and deep Q-learning in order to compute approximately optimal reachable sets and controllers for a variety of non-linear systems. Our results are a first step in showing how Hamilton-Jacobi reachability can be coupled with reinforcement learning methodologies to yield solutions for problems where one wishes to optimize for both safety and liveness.

\appendices
\renewcommand{\arraystretch}{1.15}
\begin{table*}[!t]
\centering
\caption{The hyper-parameters used in different environments. PP: point particle, DC: Dubins car, LL: Lunar Lander, AD: attack-defense.}
\begin{threeparttable}
    \begin{tabular}{ |l||c|c|c|c|c| } 
        \hline
        \multirow{2}{*}{Environment} & \multicolumn{2}{c|}{PP} & \multirow{2}{*}{DC} & \multirow{2}{*}{LL} & \multirow{2}{*}{AD}
        \\ \cline{2-3}
        & Fig. \ref{fig:exp_convergentFamily} & Fig. \ref{fig:exp_zermelo_compare}, \ref{fig:exp_show} & & &
        \\ \hline
        state dimension & \multicolumn{2}{c|}{$2$} & $3$ & $6$ & $6$
        \\ \hline
        action set dimension & \multicolumn{2}{c|}{$3$} & $3$ & $4$ & $9$
        \\ \hline
        $\#$ gradient updates ($T$) & $12$M & \multicolumn{2}{c|}{$400, 000$} & $5$M & $4$M
        \\ \hline
        NN architecture & \multicolumn{3}{c|}{($100$, $20$)} & \multicolumn{2}{c|}{($512$, $512$, $512$)}
        \\ \hline
        NN activation & \multicolumn{5}{c|}{$\mathrm{Tanh}$}
        \\ \hline
        learning rate & \multicolumn{5}{c|}{$\max \{ 0.001 \times 0.8^{\lfloor 20x/T \rfloor},\ 0.0001 \}$\tnote{*}}
        \\ \hline
        optimizer & \multicolumn{2}{c|}{Adam} & \multicolumn{3}{c|}{AdamW}
        \\ \hline
        discount rate ($\gamma$) & \tnote{\S} & \multicolumn{2}{c|}{$0.9999$} & \multicolumn{2}{c|}{\tnote{\S}}
        \\ \hline
        exploration-exploitation & \multicolumn{5}{c|}{$\max \{ 0.95 \times 0.6^{\lfloor 20x/T \rfloor},\ 0.05 \}$\tnote{*}}
        \\ \hline
        DDQN soft-update & \multicolumn{5}{c|}{$0.01$}
        \\ \hline
        replay buffer size & \multicolumn{3}{c|}{$10000$} & \multicolumn{2}{c|}{$50000$}
        \\ \hline
        initialization\tnote{\dag} & \multicolumn{2}{c|}{without} & $\max\{ \targfunc, \consfunc \}$ & $\consfunc$ & $\max\{ \targfunc, \consfunc \}$
        \\ \hline
    \end{tabular}
    \begin{tablenotes}
        \footnotesize
        \item[*] $x$: the number of updates so far, $T$: maximal number of updates.
        \item[\dag] with uniform samples in the $x-y$ plane
        \item[\S] $\min \{ 1 - 0.2 \times 0.5^{\lfloor 20x/T \rfloor},\ 0.999999 \}$
    \end{tablenotes}
\end{threeparttable}
\label{tab:hyper-parameter}
\end{table*}
\section{Fixed-Point RABE Derivation and Other Proofs}
\label{appendix_fixed_point}

In what follows we show how to use Bellman's principle of optimality to obtain the optimality condition for the value function:
\begin{align}
\label{eq:variational_one_}
    \valfunc(&\state) = \inf_{\csig}  \min_{\tau \in \{0,1,...\}}  \max \big\{ l(\traj_{\state}^{\csig}(\tau)) ,  \max_{\kappa \in \{0,\tau\}} g(\traj_{\state}^{\csig}(\kappa)) \big\} \nonumber\\ 
     &= \min \Big\{ \max \big\{ l(\state) ,  g(\state) \big\}  , \inf_{\csig} \mathbf{\min_{\tau \in \{1,...\}}} \\
     & \qquad \qquad \max \big\{ l\big(\traj_{\state_1}^{\mathbf{\csig}}(\tau)\big) ,  \max_{\kappa \in \{0,\tau\}} g\big(\traj_{\state}^{\csig}(\kappa)\big) \big\} \Big\} \nonumber\\ 
     \begin{split}
     &= \min \Big\{ \max \big\{ l(\state) ,  g(\state) \big\} , \inf_{\csig} \min_{\tau \in \{1,...\}} \\ & \qquad \qquad \max \big\{ g(\state), \max\{l(\traj_{\state_1}^{\csig}(\tau)), \max_{\kappa \in \{1,\tau\}} g(\traj_{\state_1}^{\mathbf{\csig}}(\kappa)) \}\big\}\Big\} 
     \end{split} \nonumber\\
     \begin{split}
     &= \min \Big\{ \max \big\{ l(\state) ,  g(\state) \big\} , \inf_{u}  \max \big\{g(\state),\\ & \qquad \qquad \inf_{\csig_{1}} \min_{\tau \in \{1,...\}}  \max\{ l(\traj_{\state_1}^{\csig_1}(\tau)), \max_{\kappa \in \{1,\tau\}} g(\traj_{\state_1}^{\csig_1}(\kappa)) \}\big\}\Big\} 
     \end{split} \nonumber\\
     &= \min \{ \max \{ l(\state) ,  g(\state) \} , \inf_{u}  \max \{g(\state), \valfunc(\state_+^\ctrl)\}\} \nonumber\\ 
     &= \min \{ \max \{ l(\state) ,  g(\state) \} , \max \{g(\state), \inf_{u} \valfunc(\state_+^\ctrl)\}\}, \nonumber
\end{align}
where $\state_+^\ctrl = \state + f(\state, \ctrl) \Delta t$, and $\csig_1$ refers to the control signal without the first control input. Noting that ${\min\{\max\{a,b\},\max\{b,c\}\} = \max\{b,\min\{a,c\}\}}$ we obtain the fixed-point RABE:

\begin{IEEEeqnarray}{rl}
    \valfunc(\state) = \max\Big\{ \consfunc(\state), \ & \min\big\{\targfunc(\state),
    \inf_{\ctrl \in \cset} \valfunc(\state_+^\ctrl )~\big\}~\Big\}.
\end{IEEEeqnarray}

Now we prove the proposed discounted (D)RABE induces a contraction mapping under the supremum norm, as described in Proposition~\ref{prop:contraction}.
\begin{proof}[Proof of Proposition~\ref{prop:contraction}]
Observing that $|\max \{a, b\} - \max \{a, c\}| \leq |b-c|$, $\forall a,b,c\in\mathbb{R}$, we have
\begin{IEEEeqnarray}{rl}
\Big| & \max \big\{
    \min \{ \
            \min_{\ctrl \in \cset} \valfunc_\gamma(\state_+^\ctrl),\ 
            \targfunc(\state)
        \},\
        \consfunc(\state) 
    \big\} \ -
\nonumber \\ & 
    \max \big\{
        \min \{ \
            \min_{\ctrl \in \cset} \tilde\valfunc_\gamma(\state_+^\ctrl),\ 
            \targfunc(\state) 
        \},\
        \consfunc(\state)
    \big\}
\Big| 
\nonumber \\
\leq \ & \left| 
    \min \{ \
        \min_{\ctrl \in \cset} \valfunc_\gamma( \state_+^\ctrl),\ 
        \targfunc(\state) \} -
    \min \{ \
        \min_{\ctrl \in \cset} \tilde\valfunc_\gamma( \state_+^\ctrl),\ 
        \targfunc(\state) \}
\right|
\nonumber \\
\leq \  & \left| 
\min_{\ctrl \in \cset} \valfunc_\gamma( \state_+^\ctrl) -
\min_{\ctrl \in \cset} \tilde\valfunc_\gamma( \state_+^\ctrl)
\right|.
\nonumber
\label{eq:tmp1}
\end{IEEEeqnarray}
Without loss of generality, suppose $\min_{\ctrl \in \cset} \valfunc_\gamma( \state_+^\ctrl) >
\min_{\ctrl \in \cset} \tilde\valfunc_\gamma( \state_+^\ctrl)$ and let $\ctrl^* := \arg \min_{\ctrl \in \cset} \tilde\valfunc_\gamma( \state_+^\ctrl)$.
Thus, $\valfunc_\gamma( \state_+^{\ctrl^*}) \geq \min_{\ctrl \in \cset} \valfunc_\gamma( \state_+^\ctrl) \geq \tilde\valfunc_\gamma( \state_+^{\ctrl^*})$.
Then, we have
\begin{IEEEeqnarray}{ll}
& \Big| B_\gamma [\valfunc_\gamma](\state) - B_\gamma [\tilde\valfunc_\gamma](\state) \Big|
\nonumber \\
\leq & \gamma\ \Big| 
    \min_{\ctrl \in \cset} \valfunc_\gamma( \state_+^\ctrl) -
    \min_{\ctrl \in \cset} \tilde\valfunc_\gamma( \state_+^\ctrl)
\Big| 
\leq \gamma\ \Big| 
    \valfunc_\gamma( \state_+^{\ctrl^\star}) -
    \tilde\valfunc_\gamma ( \state_+^{\ctrl^\star})
\Big| 
\nonumber \\
\leq & \gamma\ \max_{\ctrl \in \cset} \Big| 
    \valfunc_\gamma( \state_+^\ctrl) - \tilde\valfunc_\gamma( \state_+^\ctrl)
\Big|
\leq \gamma\ \max_{\state \in \xset} \Big| 
    \valfunc_\gamma(\state) - \tilde\valfunc_\gamma(\state)
\Big|.
\nonumber
\qedhere
\end{IEEEeqnarray}
\end{proof}

Further, we prove the convergence result for the Reach-Avoid Q-Learning Scheme, as presented in Proposition~\ref{prop:ra_q_learning}.
\begin{proof}[Proof of Proposition~\ref{prop:ra_q_learning}]
    Our proof follows from the general proof of Q-learning convergence for finite-state, finite-action Markov decision processes presented in \cite{Tsitsiklis1994}.
    Our transition dynamics $\dyn$, initialization and policy randomization, and learning rate $\alpha_k$ satisfy Assumptions 1, 2, and 3 in \cite{Tsitsiklis1994} in the standard way.
    The only critical difference in the proof is the contraction mapping, which we obtain under the supremum norm by Proposition \ref{prop:contraction}:
    with this, Assumption 5 in \cite{Tsitsiklis1994} is met, granting convergence of Q-learning by Theorem~3~in~\cite{Tsitsiklis1994}.
\end{proof}

In addition, we prove that the fixed-point solution of DRABE is the solution of RABE when the discount rate approaches $1$, as stated in Proposition~\ref{prop:convergent_value}.
\begin{proof}[Proof of Proposition~\ref{prop:convergent_value}]
Let $\targfunc_t$ and $\consfunc_t$ stand for the target margin and safety margin at $t$-th time step of a discrete-time trajectory.
The discounted value of this trajectory is
\begin{IEEEeqnarray}{c}
\outcome_\gamma^{\csig}(\state) \ = \
(1-\gamma) \max \{ \targfunc_0, \consfunc_0 \} + \gamma \max \Big\{
    \consfunc_0,
    \min \big\{
        \targfunc_0, 
\nonumber \\
\ \ (1-\gamma) \max \{ \targfunc_1, \consfunc_1 \} + 
        \gamma \max\{ \consfunc_1, \min \{ \targfunc_1, \cdots \}
    \big\}
\Big\},
\end{IEEEeqnarray}
which is the explicit form of the objective minimized in \eqref{eq:DRABE}.
By taking $\gamma \rightarrow 1$, we recover
\begin{IEEEeqnarray}{rcl}
    \lim_{\gamma \rightarrow 1} & \outcome_\gamma^{\csig}(\state)
    = & \max \Big\{
        \consfunc_0, \min \big\{ \targfunc_0, \max \big\{ \consfunc_1, \min \{ \targfunc_1, \cdots \} \big\} \big\} \Big\}
    \nonumber \\
    = & \min & \big\{
        \max \{ \consfunc_0, \targfunc_0 \},
        \max \big\{
            \consfunc_0, 
            \consfunc_1,
            \min \{ \targfunc_1, \cdots \} 
        \big\}
    \big\}
    \nonumber \\
    = & \min & \big\{
        \max \{ \consfunc_0, \targfunc_0 \}, \
        \max \{ \consfunc_0, \consfunc_1, \targfunc_1 \},
    \nonumber \\
    & & \ \ \max \{
        \consfunc_0, \consfunc_1, \consfunc_2,
        \min \{ \targfunc_2, \cdots \} \}
    \big\}
    \nonumber \\
    = & \min_{\tau \in \{0, 1, \cdots\}} & \max \Big\{ \targfunc_\tau, \ \max_{\kappa \in \{0, 1, \cdots, \tau \}} \consfunc_\kappa \Big\}
    = \outcome^{\csig}(\state). \nonumber
\end{IEEEeqnarray}
Thus,
\begin{IEEEeqnarray}{c}
    \lim_{\gamma \rightarrow 1} \valfunc_\gamma(\state) = \lim_{\gamma \rightarrow 1} \min_{\csig \in \csigset} \outcome_\gamma^{\csig}(\state) = \min_{\csig \in \csigset} \outcome^{\csig}(\state) = \valfunc(\state). \nonumber \qedhere
\end{IEEEeqnarray}
\end{proof}

Finally, we prove that any choice of $\gamma$ induces a conservative approximation to the exact reach-avoid set, as stated in Theorem~\ref{thm:underapproximation_convergent_family}.
\begin{proof}[Proof of Theorem~\ref{thm:underapproximation_convergent_family}]
For brevity, let $\ctrl^* := \arg \min_\ctrl \valfunc( \state_+^\ctrl)$.
From \eqref{eq:cost_functional_reach_avoid} and \eqref{eq:value_function_reach_avoid}, we know $\max\{ \targfunc(\state), \consfunc(\state) \}$ is always bigger or equal than $\valfunc(\state)$, so we have $\valfunc_{\gamma_1} (\state) \geq \valfunc_{\gamma_2} (\state)$ for any $\gamma_1 \leq \gamma_2$.
Here we enumerate all cases as follows
\begin{enumerate}
    \item $\valfunc( \state_+^{\ctrl^*}) \geq \targfunc(\state)$:
        \begin{IEEEeqnarray}{l}
           \max\{ \targfunc(\state), \consfunc(\state) \} - \max \big\{ 
            \min \big\{
                \valfunc( \state_+^{\ctrl^*}),\ 
                \targfunc(\state) \big\},\
            \consfunc(\state) \big\} 
            \nonumber \\
            = \ \max\{ \targfunc(\state), \consfunc(\state) \} -\max\{ \targfunc(\state), \consfunc(\state) \} = 0.
            \nonumber
        \end{IEEEeqnarray}
    \item $\consfunc(\state) \geq \valfunc( \state_+^{\ctrl^*})$ and $\consfunc(\state) \geq \targfunc(\state)$:
        \begin{IEEEeqnarray}{l}
           \max\{ \targfunc(\state), \consfunc(\state) \} - \max \big\{ 
            \min \big\{
                \valfunc( \state_+^{\ctrl^*}),\ 
                \targfunc(\state) \big\},\
            \consfunc(\state) \big\}
            \nonumber \\
            = \ \consfunc(\state) - \consfunc(\state) = 0.
            \nonumber
        \end{IEEEeqnarray}
    \item $\targfunc(\state) > \valfunc( \state_+^{\ctrl^*}) \geq \consfunc(\state)$:
        \begin{IEEEeqnarray}{l}
           \max\{ \targfunc(\state), \consfunc(\state) \} - \max \big\{ 
            \min \big\{
                \valfunc( \state_+^{\ctrl^*}),\ 
                \targfunc(\state) \big\},\
            \consfunc(\state) \big\}
            \nonumber \\
            = \ \targfunc(\state) - \valfunc( \state_+^{\ctrl^*}) > 0.
            \nonumber
        \end{IEEEeqnarray}
    \item $\targfunc(\state) > \consfunc(\state) > \valfunc( \state_+^{\ctrl^*})$:
        \begin{IEEEeqnarray}{l}
           \max\{ \targfunc(\state), \consfunc(\state) \} - \max \big\{ 
            \min \big\{
                \valfunc( \state_+^{\ctrl^*}),\ 
                \targfunc(\state) \big\},\
            \consfunc(\state) \big\}
            \nonumber \\
            = \ \targfunc(\state) - \consfunc(\state) > 0.
            \nonumber
        \end{IEEEeqnarray}
\end{enumerate}
Since $\forall \gamma_1 \leq \gamma_2,\ \forall \state : \valfunc_{\gamma_1} (\state) \leq 0 \rightarrow \valfunc_{\gamma_2} (\state) \leq 0$, we have $\reachavoid_{\gamma_1} \subseteq \reachavoid_{\gamma_2}$.
Also, taking the limit of \eqref{eq:DRABE} as $\gamma \rightarrow 1$ we recover $\lim_{\gamma \rightarrow 1} \valfunc_\gamma (\state) = \valfunc (\state) $.

Finally, we consider (Hausdorff) set convergence:
we seek to show $\forall\delta>0, \exists\gamma\in[0,1) \mid d_H(\reachavoid_\gamma,\reachavoid)<\delta$.
Since $\reachavoid_\gamma\subseteq\reachavoid$, we have: ${d_H(\reachavoid_\gamma,\reachavoid) = \sup_{x\in\reachavoid} \inf_{y\in\reachavoid_\gamma}\|x - y\|}$ (for an arbitrary norm $\|\cdot\|$ on $\xset$).
Provided that $\valfunc$ is nowhere locally constant on its zero level set and the interior of $\reachavoid$ is nonempty, we have that for any $\delta>0$, there exists $\epsilon>0$ such that near any $x$ with $\valfunc(x)\le 0$ there exists a point $y$, $\|x-y\|\le\delta$ for which $\valfunc(y) < -\epsilon$.
From the limit of~\eqref{eq:DRABE}, we also have that for any such $\epsilon$, there exists a sufficiently large $\gamma\in[0,1)$ for which $|\valfunc_{\gamma}(y)-\valfunc(y)|<\epsilon$, and hence $\valfunc_{\gamma}(y) < 0$.
Therefore, for any $\delta>0$ we can find $\gamma \in [0,1)$ such that any point $x\in\reachavoid$ has a corresponding point $y\in\reachavoid_\gamma$ within distance $\delta$.
This concludes the proof.
\end{proof}
\section{Practical Training Considerations}
\label{appendix_traing}

Table \ref{tab:hyper-parameter} lists the state dimension, action set dimension, DDQN hyper-parameters and NN training method in each environment.
Among these hyper-parameters, we find that the discount rate, memory buffer size and NN architecture are the most crucial ones.
If we start with very high $\gamma$, say 0.99, the convergence becomes extremely slow and DDQN cannot learn the value function well.
For more complicated environment, we need deeper NN and larger memory buffer, so we have stronger representability and more diverse state-action pairs.




\balance


\printbibliography

\end{document}

